\newtheorem{theorem}{Theorem}
\newtheorem{lemma}[theorem]{Lemma}
\newtheorem{corollary}[theorem]{Corollary}
\newcommand{\oea}{\mbox{$(1 + 1)$~EA}\xspace}
\newcommand{\oplea}{\mbox{$(1+\lambda)$~EA}\xspace}
\newcommand{\mpoea}{\mbox{$(\mu+1)$~EA}\xspace}
\newcommand{\mplea}{\mbox{$(\mu+\lambda)$~EA}\xspace}
\newcommand{\mclea}{\mbox{${(\mu,\lambda)}$~EA}\xspace}
\newcommand{\oclea}{\mbox{$(1,\lambda)$~EA}\xspace}
\newcommand{\opllga}{\mbox{$(1+(\lambda,\lambda))$~GA}\xspace}
\newcommand{\OM}{\textsc{OM}\xspace}
\newcommand{\onemax}{\textsc{OneMax}\xspace}
\newcommand{\LO}{\textsc{Leading\-Ones}\xspace}
\newcommand{\leadingones}{\LO}
\newcommand{\needle}{\textsc{Needle}\xspace}
\newcommand{\cliff}{\textsc{Cliff}\xspace}
\newcommand{\plateau}{\textsc{Plateau}\xspace}
\newcommand{\jump}{\textsc{Jump}\xspace}
\newcommand{\gmax}{g_{\max}}
\newcommand{\X}{\mathcal{X}}
\newcommand{\Geins}{{\textbf{(G1)}}\xspace}
\newcommand{\Gzwei}{{\textbf{(G2)}}\xspace}
\newcommand{\Gdrei}{{\textbf{(G3)}}\xspace}
\newcommand{\R}{\ensuremath{\mathbb{R}}}
\newcommand{\N}{\ensuremath{\mathbb{N}}} 
\newcommand{\Z}{\ensuremath{\mathbb{Z}}}
\newcommand{\calE}{\ensuremath{\mathcal{E}}}
\DeclareMathOperator{\Bin}{Bin}
\newcommand{\eps}{\varepsilon}
\let\originalleft\left
\let\originalright\right
\renewcommand{\left}{\mathopen{}\mathclose\bgroup\originalleft}
\renewcommand{\right}{\aftergroup\egroup\originalright}
\begin{document}
\title{Does Comma Selection Help To Cope With Local Optima?\thanks{This is the full version of a paper~\cite{Doerr20gecco} that appeared at GECCO 2020. It contains all proofs and additional information that had to be omitted from the conference version for reasons of space. }}

\author{Benjamin Doerr\\ Laboratoire d'Informatique (LIX)\\ CNRS\\ \'Ecole Polytechnique\\ Institut Polytechnique de Paris\\ Palaiseau\\ France}

\maketitle

\sloppy{
\begin{abstract}
  One hope when using non-elitism in evolutionary computation is that the ability to abandon the current-best solution aids leaving local optima. To improve our understanding of this mechanism, we perform a rigorous runtime analysis of a basic non-elitist evolutionary algorithm (EA), the $(\mu,\lambda)$ EA, on the most basic benchmark function with a local optimum, the jump function. We prove that for all reasonable values of the parameters and the problem, the expected runtime of the $(\mu,\lambda)$~EA is, apart from lower order terms, at least as large as the expected runtime of its elitist counterpart, the $(\mu+\lambda)$~EA (for which we conduct the first runtime analysis on jump functions to allow this comparison). Consequently, the ability of the $(\mu,\lambda)$~EA to leave local optima to inferior solutions does not lead to a runtime advantage.
  
  We complement this lower bound with an upper bound that, for broad ranges of the parameters, is identical to our lower bound apart from lower order terms. This is the first runtime result for a non-elitist algorithm on a multi-modal problem that is tight apart from lower order terms.
\end{abstract}

\section{Introduction}

The mathematical runtime analysis of evolutionary algorithms (EAs) and other randomized search heuristics is a young but established subfield of the general research area of heuristic search~\cite{NeumannW10,AugerD11,Jansen13,DoerrN20}. This field, naturally, has started with regarding the performance of simple algorithms on simple test problems: The problems usually were \emph{unimodal}, that is, without local optima different from the global optimum, the algorithms were \emph{elitist} and often had \emph{trivial populations}, and the runtime guarantees only estimated the \emph{asymptotic order of magnitude}, that is, gave $O(\cdot)$ upper bounds or $\Omega(\cdot)$ lower bounds. 

Despite this restricted scope, many fundamental results have been obtained and our understanding of the working principles of EAs has significantly increased with these works.
In this work, we go a step further with a \emph{tight} (apart from lower order terms) analysis of how a \emph{non-elitist} evolutionary algorithm with \emph{both non-trivial parent and offspring populations} optimizes a \emph{multimodal} problem. 

In contrast to the practical use of evolutionary algorithms, where non-elitism is often employed, the mathematical analysis of evolutionary algorithms so far could find only little evidence for the use of non-elitism. The few existing works, very roughly speaking (see Section~\ref{ssec:nonel} for more details) indicate that when the selection pressure is large, then the non-elitist algorithm simulates an elitist one, and when the selection pressure is low, then no function with unique global optimum can be optimized efficiently. The gap between these two regimes is typically very small. Consequently, a possible profit from non-elitism would require a careful parameter choice.

One often named advantage of non-elitist algorithms is their ability to leave local optima to inferior solutions, which can reduce the time spent uselessly in local optima. To obtain a rigorous view on this possible advantage, we analyze the performance of the well-known \mclea (see Section~\ref{ssec:algo} for a precise definition) on the multimodal jump function benchmark~\cite{DrosteJW02}. We note that, apart from some sporadic results on custom-tailored example problems and the corollaries from very general results (see Theorems~\ref{thm:oldjumpLB} and~\ref{thm:oldjump} further below), this is the first in-depth study of how a non-elitist algorithm optimizes a classic benchmark with local optima. 

Our main result (see Section~\ref{sec:lower}) is that in this setting, the small middle range between a too small and a too high selection pressure, which could be envisaged from previous works, does not exist. Rather, the two undesired regimes overlap significantly. We note that for the \mclea, the selection pressure is reasonably well described by the ratio of the offspring population size $\lambda$ to the parent population size $\mu$. The selection pressure is low if $\lambda \le (1-\eps) e \mu$ for some constant $\eps > 0$. In this case, the \mclea needs an exponential time to optimize any function $f : \{0,1\}^n \to \R$ with at most a polynomial number of global optima~\cite{Lehre10}. The selection pressure is high if $\lambda \ge (1+\eps) e \mu$ for some constant $\eps > 0$ and if $\lambda$ is at least logarithmic in the envisaged runtime. In this case, the \mclea can optimize many classic benchmark functions in a runtime at most a constant factor slower than, say, the \mplea, see~\cite{Lehre11}. 

Our main result implies (Corollary~\ref{cor:lower}) that already when $\lambda \ge 2 \mu$, $\lambda$ is super-constant, and $\lambda = o(n^{k-1})$, the runtime of the $\mclea$ on all jump functions with jump size $k \le  n^{1-\eps}$ is at least the runtime of the \mplea (apart from lower order terms); to prove this statement, we also conduct the first so far and sufficiently tight runtime analysis of the \mplea on jump functions (Theorem~\ref{thm:plus}). Consequently, the two regimes of a too low selection pressure and of no advantage over the elitist algorithm overlap in the range  $\lambda \in [2\mu, (1-\eps)e\mu]$, leaving for jump functions no space for a middle regime with runtime advantages from non-elitism. We note that our result, while natural, is not obvious. In particular, as a comparison of the $(1,1)$~EA and the \oea on highly deceptive functions shows (see Section~\ref{sssec:nonelhelpful} for more details), it is not always true that the elitist algorithm is at least as good as its non-elitist counterpart. 

Our result does not generally disrecommend to use non-elitism, in particular, it does not say anything about possible other advantages from using non-elitism. Our result, however, does indicate that the ability to leave local optima to inferior solutions is non-trivial to turn into a runtime advantage (whereas at the same time, as observed in previous works, there is a significant risk that the selection pressure is too low to admit any reasonable progress).

We also prove an upper bound for the runtime of the \mclea on jump functions (Theorem~\ref{thm:upper}), which shows that our lower bound for large ranges of the parameters (but, of course, only for $\lambda \ge (1+\eps) e \mu$) is tight including the leading constant. This appears to be the first precise\footnote{We use the term \emph{precise} to denote runtime estimates that are asymptotically tight including the leading constant, that is, where the estimated runtime $\tilde T(n)$ and the true runtime $T(n)$ for problem size $n$ satisfy $\lim\limits_{n \to \infty} \tilde T(n) / T(n) = 1$.} on  runtime result for a non-trivial non-elitist algorithm on a non-trivial problem. 

From the technical perspective, it is noteworthy that we obtain precise bounds in settings where the previously used methods (negative drift for lower bounds, level-based analyses of non-elitist population processes for upper bounds) could not give precise analyses, and in the case of negative drift could usually not even determine the right asymptotic order of the runtime. We are optimistic that our methods will be profitable for other runtime analyses as well.

\section{State of the Art and Our Results}\label{sec:state}

This work progresses the state of the art in three directions with active research in the recent past, namely non-elitist evolutionary algorithms, precise runtime analyses, and methods to prove lower bounds in the presence of negative drift and upper bounds for non-elitist population processes. We now describe these previous states of the art and detail what is the particular progress made in this work. We concentrate ourselves on classic evolutionary algorithms (also called genetic algorithms) for the optimization in discrete search spaces. We note that non-elitism has been used in other randomized search heuristics such as the Metropolis algorithm~\cite{JansenW07,WangZD21}, simulated annealing~\cite{Jansen05,Wegener05}, strong-selection-weak-mutation (SSWM)~\cite{PaixaoHST17,OlivetoPHST18}, and memetic algorithms~\cite{NguyenS20}. Letting the selection decisions not only depend on the fitness, e.g., in tabu search or when using fitness sharing, also introduces some form of non-elitism. From a broader perspective, also many probabilistic model building algorithms such as ant colony optimizers or estimation-of-distribution algorithms can be seen as non-elitist, since they often allow moves to inferior models. From an even broader point of view, even restart strategies can be seen as a form of non-elitism. While all these research directions are interesting, it seems to us that the results obtained there, to the extent that we understand them, are not too closely related to our results and therefore not really comparable.

\subsection{Non-Elitist Algorithms}\label{ssec:nonel}

While non-elitist evolutionary algorithms are used a lot in practice, the mathematical theory of EAs so far was not very successful in providing convincing evidences for the usefulness of non-elitism. This might be due to the fact that rigorous research on non-elitist algorithms has started only relatively late, caused among others by the fact that many non-elitist algorithms require non-trivial populations, which form another challenge for mathematical analyses. Another reason, naturally, could be that non-elitism is not as profitable as generally thought. Our work rather points into the latter direction.

The previous works on non-elitist algorithms can roughly be grouped as follows.

\subsubsection{Exponential Runtimes When the Selection Pressure is Low}\label{sssec:nonellow}

 By definition, non-elitist algorithms may lose good solutions. When this happens too frequently (low selection pressure), then the EA finds it hard to converge to good solutions, resulting in a poor performance

The first to make this empirical observation mathematically precise in a very general manner was Lehre in his remarkable work~\cite{Lehre10}. For a broad class of non-elitist population-based EAs, he gives conditions on the parameters that imply that the EA cannot optimize any pseudo-Boolean function $f : \{0,1\}^n \to \R$ with at most a polynomial number of optima in time sub-exponential in $n$. Due to their general nature, we have to restrict ourselves here to what Lehre's results imply for the \mclea, but we note that analogous results hold for a much wider class of algorithms. For the \mclea using the usual mutation rate~$\frac 1n$, Lehre shows that when $\lambda \le (1-\eps) e \mu$, where $\eps > 0$ is any positive constant, then the time to find a (global) optimum of any pseudo-Boolean function with at most a polynomial number of optima is exponential in $n$ with high probability. 

We note that more specific results showing the danger of a too low selection pressure have appeared earlier. For example, already in 2007 J\"agersk\"upper and Storch~\cite[Theorem~1]{JagerskupperS07} showed that the \oclea with $\lambda \le \frac{1}{14} \ln(n)$ is inefficient on any pseudo-Boolean function with a unique optimum. The range of $\lambda$ for which such a negative performance is observed was later extended to the asymptotically tight value $\lambda \le (1-\eps) \log_{\frac{e}{e-1}} n$ by Rowe and Sudholt~\cite{RoweS14}. Happ, Johannsen, Klein, and Neumann~\cite{HappJKN08} showed that two simple (1+1)-type hillclimbers using fitness proportionate selection in the choice of the surviving individual are not efficient on any linear function with positive weights. Neumann, Oliveto, and Witt~\cite{NeumannOW09} showed that a mutation-only variant of the Simple Genetic Algorithm with fitness proportionate selection is inefficient on the \onemax function when the population size $\mu$ is at most polynomial, and it is inefficient on any pseudo-Boolean function with unique global optimum when $\mu \le \frac 14 \ln(n)$. Oliveto and Witt~\cite{OlivetoW15} showed that the true Simple Genetic Algorithm (using crossover) cannot optimize \onemax efficiently when $\mu \le n^{\frac 14 - \eps}$.

We note that the methods in~\cite{Lehre10} were also used to prove lower bounds for particular objective functions. The following result was given for a variant of jump functions~\cite[Theorem~5]{Lehre10}. To be precise, a similar result was proven for a tournament-selection algorithm and it was stated that an analogous statement, which we believe to be the following, holds for the \mclea as well. As a reviewer pointed out, it is not immediately clear how the proof in~\cite{Lehre10} extends to the classic definition of jump functions.

\begin{theorem}[cf.~Lehre~\cite{Lehre10}]\label{thm:oldjumpLB}
  Let $n \in \Z_{\ge 1}$, $\eps > 0$ a constant, $k \le (0.2 - \eps)n$, and $k = \omega(\log n)$. Let $f_k : \{0,1\}^n \to \R$ be defined by $f_k(x) = \onemax(x)$ when $\onemax(x) \notin [n-k+1..n-1]$ and $f_k(x) = 0$ otherwise. Then the expected runtime of the \mclea with polynomial $\lambda$ on $\jump_{nk}$ is at least $\exp(\Omega(k))$, where all asymptotics is for $n$ tending to infinity.
\end{theorem}

\subsubsection{Pseudo-Elitism When the Selection Pressure is High}\label{sssec:nonelhigh}

When a non-elitist algorithm has the property that, despite the theoretical danger of losing good solutions, it very rarely does so, then its optimization behavior becomes very similar to the one of an elitist algorithm. Again the first to make this effect precise for a broad class of algorithms was Lehre in his first paper on level-based arguments for non-elitist populations~\cite{Lehre11}.  

Lehre's fitness-level theorem for non-elitist population-based algorithms assumes that the search space can be partitioned into levels such that (i)~the algorithm has a reasonable chance to sample a solution in some level $j$ or higher once a constant fraction of the population is at least on level $j-1$ (``base level'') and (ii)~there is an exponential growth of the number of individuals on levels higher than the base level; more precisely (but still simplified), if there are $\mu_0 < \gamma_0 \mu$ individuals above the base level, then in the next generation the number of individuals above the base level follows a binomial distribution with parameters $\mu$ and $p = (1+\delta) \frac{\mu_0}{\mu}$, where $\gamma_0$ and $\delta$ are suitable parameters. If in addition the population sizes involved are large enough, then (again very roughly speaking) the runtime of the algorithm is at most a constant factor larger than the runtime guarantee which could be obtained for an elitist analogue of the non-elitist EA. From the assumptions made here, it cannot be excluded that the non-elitist EA loses a best-so-far solution; however, due to the exponential growth of condition~(ii) and the sufficiently large population size, this can only happen if there are few individuals above the base level. Hence the assumptions of the level-based method, roughly speaking, impose that that the EA behaves like an elitist algorithm except when it just has found a new best solution. In this case, with positive probability (usually at most some constant less than one) the new solution is lost. This constant probability for losing a new best individual (and the resulting need to re-generate one) may lead to a constant-factor loss in the runtime, but not more. Very roughly speaking, one can say that such non-elitist EAs, while formally non-elitist algorithms, do nothing else than a slightly slowed-down emulation of an elitist algorithm. That said, it has to be remarked that both proving level-based theorems (see \cite{Lehre11,DangL16algo,CorusDEL18,DoerrK21}) and applying them (see also~\cite{DangLN19}) is technical and much less trivial than what the rule of thumb ``high selection pressure imitates elitism'' suggests. 

For the optimization of jump functions via the \mclea, the work~\cite{CorusDEL18} implies the following result. We note that it was shown only to the variant of jump functions  regarded in Theorem~\ref{thm:oldjumpLB} (where the fitness in the gap region is uniformly zero), but from the proofs it is clear that the result also holds for the standard definition~\cite{DrosteJW02} used in this work.

\begin{theorem}[\cite{CorusDEL18}]\label{thm:oldjump}
  Let $k \in [1..n]$. Let $\eps > 0$ be a constant and let $c$ be a sufficiently large constant (depending on $\eps$). Let $\lambda \ge c k \ln(n)$ and $\mu \le \frac{\lambda}{(1+\eps)e}$. Then runtime $T$ of the \mclea on $\jump_{nk}$ satisfies 
  \[E[T] = O(n^k + n \lambda + \lambda \log \lambda).\]
\end{theorem}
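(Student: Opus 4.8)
The plan is to apply the level-based theorem for non-elitist populations of Corus, Dang, Eremeev, and Lehre~\cite{CorusDEL18}. This is legitimate because the \mclea fits its $D$-based scheme: in each generation the $\lambda$ offspring are produced independently and identically, each by selecting a uniformly random member of the $\mu$ fittest individuals of the current (size-$\lambda$) population and applying standard bit mutation with rate $\tfrac1n$. I would use the fitness-based partition of $\{0,1\}^n$ in which, for $1\le i\le n-k$, the level of index $i+1$ is the set of search points with exactly $i$ one-bits, the top level is $\{1^n\}$, and all remaining points -- the all-zeros string together with the whole fitness-gap region $\{x:n-k<|x|_1<n\}$ -- form a single bottom level. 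On $\jump_{nk}$ distinct levels carry distinct fitness values, a higher index meaning a higher fitness, with ties occurring only inside the bottom level; hence truncation selection by fitness coincides with truncation selection by level, which is what makes the growth conditions checkable.

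I would then verify conditions (G1)--(G3). For (G1): from a parent with $i\le n-k$ ones, flipping exactly one of its $n-i$ zero-bits moves it up one level, so one may take $z_{i+1}=\Omega((n-i)/n)$ on the levels below the local optimum; from a parent at the local optimum, flipping exactly the $k$ missing bits yields $1^n$ with probability $(1/n)^k(1-\tfrac1n)^{n-k}=\Omega(n^{-k})$; and from any gap point, decreasing the number of ones to $n-k$ takes at most $k-1$ targeted flips (the all-zeros string being even easier), so one may take $z_1=\Omega(n^{1-k})$ -- in particular the time to descend through the gap is already absorbed here. For (G2): put $\gamma_0=\tfrac1{(1+\eps)e}$ and $\delta=\tfrac\eps2$, and suppose $\gamma\lambda$ individuals lie strictly above the base level with $0<\gamma<\gamma_0$. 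If $\gamma\lambda\le\mu$ all of them survive truncation, so the chosen parent lies above the base level with probability $\gamma\lambda/\mu\ge\gamma(1+\eps)e$ by the hypothesis $\mu\le\lambda/((1+\eps)e)$; if $\gamma\lambda>\mu$ every surviving parent lies above the base level. In either case, retaining such a parent above the base level by flipping no bit costs only the factor $(1-\tfrac1n)^n\ge\tfrac1e(1-\tfrac1n)$, and a short computation shows the resulting probability is at least $(1+\delta)\gamma$ once $n$ is large. Finally (G3) requires $\lambda\ge\tfrac4{\gamma_0\delta^2}\ln\!\frac{128m}{\delta^2 z_*}$, with $m=\Theta(n)$ levels and $z_*\ge\Omega(n^{-k})$, i.e.\ $\lambda=\Omega(k\ln n)$ with an implied constant depending only on $\eps$; this is exactly what the hypothesis $\lambda\ge ck\ln n$ provides when $c=c(\eps)$ is large enough.

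The theorem then gives $E[T]\le\tfrac8{\delta^2}\sum_j\big(\lambda\ln\frac{6\delta\lambda}{4+z_j\delta\lambda}+\tfrac1{z_j}\big)$, and it remains to evaluate this sum. The terms $1/z_j$ contribute $O(n^k)$ from the bottom level and from the final jump, and $O\!\big(n\sum_{i\le n-k}\tfrac1{n-i}\big)=O(n\log n)=O(n\lambda)$ from the levels below the local optimum. In the logarithmic part, the levels below the local optimum with $z_j\delta\lambda\ge4$ contribute $\sum\lambda\ln(6/z_j)=O(n\lambda)$ by a Stirling estimate of $\sum_i\ln\frac n{n-i}$; those with $z_j\delta\lambda<4$ number only $O(n/\lambda)$ and contribute $O\!\big(\tfrac n\lambda\cdot\lambda\log\lambda\big)=O(n\log\lambda)=O(n\lambda)$; and the bottom-level and final-jump transitions contribute $O(\lambda\log\lambda)$ each (here using $\lambda k\log n=O(\lambda\log\lambda)$ whenever $\lambda=\Omega(n^{k-1})$). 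Adding these up yields $E[T]=O(n^k+n\lambda+\lambda\log\lambda)$, as claimed.

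I do not expect a single isolated hard step here: the level-based theorem is the heavy machinery, and the work lies in assembling standard components correctly. The points that need care are (i) choosing the partition so that the fitness-gap region -- into which the population may fall, especially for large $k$ -- is cleanly absorbed into one bottom level with a usable (G1) bound; (ii) the verification of (G2), where the loss $(1-\tfrac1n)^n<\tfrac1e$ has to be covered by the slack in $\mu\le\lambda/((1+\eps)e)$ -- this is exactly why the hypothesis carries the factor $(1+\eps)$ rather than merely requiring $\mu\le\lambda/e$ -- which in turn fixes the admissible drift $\delta$ and hence the constant $c$ demanded in (G3); and (iii) the somewhat tedious case split needed to collapse the level-based sum to $O(n^k+n\lambda+\lambda\log\lambda)$.
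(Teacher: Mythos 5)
The paper states this theorem purely as a quotation from~\cite{CorusDEL18} and gives no proof of its own, so there is no in-paper argument to compare against line by line; your reconstruction via the level-based theorem of~\cite{CorusDEL18}, using a fitness-compatible partition that merges the all-zeros string and the whole gap region into a single bottom level, is precisely the route the paper attributes to that reference and parallels the machinery (Theorem~\ref{thm:level}, Corollary~\ref{cor:level}) that the paper itself deploys for Theorem~\ref{thm:upper}. Your verification of \textbf{(G1)}--\textbf{(G3)} and the bookkeeping that collapses the level-based sum to $O(n^k+n\lambda+\lambda\log\lambda)$ are essentially correct; the only loose ends are cosmetic: in \textbf{(G1)} you should also cover parents that already lie in $A_{\ge j+1}$ (they produce an offspring of equal \onemax value with probability at least $\frac 1e \ge z_j$, cf.\ Lemma~\ref{lem:gleich}), and your bound $z_1=\Omega(n^{1-k})$ for escaping the bottom level degrades to $\Omega(n^{-k})$ in the extreme case $k=n$ (where there is no level with $n-k$ ones to descend to), which is still absorbed by the $O(n^k)$ term.
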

  
For the particular case of the \oclea and the \oplea, J\"agersk\"upper and Storch in an earlier work also gave fitness-level theorems~\cite[Lemma~6 and~7]{JagerskupperS07}. They also showed that both algorithms essentially behave identical for $t$ iterations when $\lambda$ is at least logarithmic in~$t$~\cite[Theorem~4]{JagerskupperS07}. This effect, without quantifying $\lambda$ and without proof, was already proposed in~\cite[p.~415]{JansenJW05}. J\"agersk\"upper and Storch show in particular that when $\lambda \ge 3 \ln n$, then the \oclea optimizes \onemax in asymptotically the same time as the \oplea. The actual runtimes given for the \oclea in~\cite{JagerskupperS07} are not tight since at that time a tight analysis of the \oplea on \onemax was still missing; however, it is clear that the arguments given in~\cite{JagerskupperS07} also allow to transfer the tight upper bound of $O(n \log n + \lambda n \frac{\log\log n}{\log n})$ for the \oplea from~\cite{DoerrK15} to the \oclea. 

The minimum value of $\lambda$ that ensures an efficient optimization of the \oclea on \onemax was lowered to the asymptotically tight value of $\lambda \ge \log_{\frac{e}{e-1}} n \approx 2.18 \ln n$ in~\cite{RoweS14}. Again, only the upper bound of $O(n \log n + n \lambda)$ was shown. We would not be surprised if with similar arguments also a bound of $O(n \log n + \lambda n \frac{\log\log n}{\log n})$ could be shown, but this is less obvious here than for the result of~\cite{JagerskupperS07}. 

For the benchmark function \leadingones, the threshold between a superpolynomial runtime of the \oclea and a runtime asymptotically equal to the one of the \oplea was shown to be at $\lambda = (1 \pm \eps) 2 \log_{\frac{e}{e-1}} n$~\cite{RoweS14}.

\subsubsection{Examples Where Non-Elitism is Helpful}\label{sssec:nonelhelpful}

The dichotomy described in the previous two subsections suggests that it is not easy to find examples where non-elitism is useful. This is indeed true apart from two exceptions. 

J\"agersk\"upper and Storch~\cite{JagerskupperS07} constructed an artificial example function that is easier to optimize for the \oclea than for the \oplea. The \cliff function $\cliff: \{0,1\}^n \to \N$ is defined by $\cliff(x) = \OM(x)$ if $\OM(x) < n - \lfloor n/3 \rfloor$ and $\cliff(x) = \OM(x) - \lfloor n/3 \rfloor$ otherwise. J\"agersk\"upper and Storch showed that the \oclea with $\lambda \ge 5 \ln n$ optimizes $\cliff$ in an expected number of $O(\exp(5 \lambda))$ fitness evaluations, whereas the \oplea with high probability needs at least $n^{n/4}$ fitness evaluations. While this runtime difference is enormous, it has to be noted that even for the best value of $\lambda = 5 \ln n$, the runtime guarantee for the \oclea is only $O(n^{25})$. Also, we remark that the local optimum of the $\cliff$ function has a particular structure which helps to leave the local optimum: Each point on the local optimum has $\lfloor n/3 \rfloor$ neighbors from which it is easy to hill-climb to the global optimum (as long as one does not use a steepest ascent strategy). Also, for each point on the local optimum there are $\Omega(n^2)$ search points in Hamming distance two from which any local search within less than $n/3$ improvements finds the global optimum.
This is a notable difference to the $\jump_{nk}$ function, where hill-climbing from any point of the search space that is not the global optimum or one of its $n$ neighbors surely leads to the local optimum. We would suspect that such larger radii of attraction are closer to the structure of difficult real-world problems, but we leave it to the reader to decide which model is most relevant for their applications.
%

We note that a second, albeit extreme and rather academic, example for an advantage of non-elitism is implicit in the early work~\cite{GarnierKS99} by Garnier, Kallel, and Schoenauer. They show that the $(1,1)$ EA on any function $f : \{0,1\}^n \to \R$ with unique global optimum has an expected optimization time of $(1+o(1)) \frac{e}{e-1} 2^n$; this follows from Proposition~3.1 in their work. 
When taking a highly deceptive function like the trap function, this runtime is significantly better than the ones of elitist algorithms, which typically are $n^{\Theta(n)}$. Of course, all this is not overly surprising -- the $(1,1)$ EA uses no form of selection and hence just performs a random walk in the search space (where the one-step distribution is given by the mutation operator). Therefore, this algorithm does not suffer from the deceptiveness of the trap function as do elitist algorithms. Also, a runtime reduction from $n^{\Theta(n)}$ to $\exp(\Theta(n))$ clearly is not breathtaking. Nevertheless, this is a second example where a \mclea significantly outperforms the corresponding \mplea.

Since in this work we are only interested in how non-elitism (and more specifically, comma selection) helps to leave local optima \emph{and} by this improve runtimes, we do not discuss in detail other motivations for employing non-elitist algorithms. We note brief{}ly, though, that comma selection is usually employed in self-adaptive algorithms. Self-adaptation means that some algorithm parameters are stored as part of the genome of the individuals and are subject to variation together with the original individual. The hope is that this constitutes a generic way to adjust algorithm parameters. When using plus selection together with self-adaptation, there would be the risk that the population at some point only contains individuals with unsuitable parameter values. Now variation will only generate inferior offspring. These will not be accepted and, consequently, the parameter values encoded in the genome of the individuals cannot be changed. When using comma selection, it is possible to accept individuals with inferior fitness, and these may have superior parameter values. We are not aware of a rigorous demonstration of this effect, but we note that the two runtime analysis papers~\cite{DangL16ppsn,DoerrWY21} on self-adaptation both use comma selection. We further note that comma selection is very common in continuous optimization, in particular, in evolution strategies, but since it is generally difficult to use insights from continuous optimization in discrete optimization and vice-versa we do not discuss results from continuous optimization here. 

\subsubsection{Our Contribution}\label{sssec:nonelour}

In Section~\ref{sec:lower}, we show that for all interesting values of the parameters of the problem and the algorithm, the expected runtime of the \mclea on jump functions is, apart from possibly lower order terms, at least the expected runtime of the \mplea. This shows that for this problem, there can be no significant advantage of using comma selection. 

Our upper bound in Theorem~\ref{thm:upper}, provided mostly to show that our analysis is tight including the leading constant, improves Theorem~\ref{thm:oldjump} by making the leading constant precise and being applicable for all offspring population sizes $\lambda \ge C \ln(n)$, $C$ a constant independent of the jump size~$k$. To the best of our knowledge, this is the first time that the runtime of a non-elitist algorithm was proven with this precision.

\subsection{Precise Runtime Analyses}

Traditionally, algorithm analysis aims at gaining a rough understanding how the runtime of an algorithm depends on the problem size. As such, most results only show statements on the asymptotic order of magnitude of the runtime, that is, results in big-Oh notation. For classic algorithmics, this is justified among others by the fact that the predominant performance measure, the number of elementary operations, already ignores constant factor differences in the execution times of the elementary operations. 

In evolutionary computation, where the classic performance measure is the number of fitness evaluations, this excuse for ignoring constant factors is not valid, and in fact, in the last few years more and more \emph{precise runtime results} have appeared, that is, results which determine the runtime asymptotically precise apart from lower order terms. Such results are useful, obviously because constant factors matter in practice, but also because many effects are visible only at constant-factor scales. For example, it was shown in~\cite{DoerrG13algo} that all $\Theta(\frac 1n)$ mutation rates lead to a $\Theta(n \log n)$ runtime of the \oea on all pseudo-Boolean linear functions, but only Witt's seminal result~\cite{Witt13} that the runtime is $(1+o(1)) \frac{e^c}{c} n \ln n$ for the mutation rate~$\frac cn$, $c>0$ a constant, allows to derive that $\frac 1n$ is the asymptotically best mutation rate.  

Overall, not too many non-trivial precise runtime results are known. In a very early work~\cite{GarnierKS99}, it was shown that the \oea with mutation rate $\frac cn$ optimizes the \onemax function in an expected time of $(1+o(1)) \frac{e^c}{c} n \ln n$ and the $\needle$ function in time $(1+o(1)) \frac{1}{1-e^c} 2^n$. More than ten years later, in independent works~\cite{BottcherDN10,Sudholt13} the precise runtime of the \oea on \leadingones was determined; here~\cite{BottcherDN10} also regarded general mutation rates and deduced from their result that the optimal mutation rate of approximately $\frac{1.59}n$ is higher than the usual recommendation~$\frac 1n$, and that a fitness dependent mutation rate gives again slightly better results (this was also the first time that a fitness dependent parameter choice was proven to be superior to static rates by at least a constant factor difference in the runtime). Precise runtime results for a broader class of $(1+1)$-type algorithms on \leadingones have recently appeared in~\cite{Doerr19tcs}. A series of recent works~\cite{AlanaziL14,LissovoiOW17,DoerrLOW18} obtained precise runtimes of different hyper-heuristics on \leadingones and thus allowed to discriminate them by their runtime. The precise expected runtime of the \oea with general unbiased mutation operator on the $\plateau_k$ function was determined~\cite{AntipovD18} to be $(1+o(1)) \binom{n}{k} p_{1:k}^{-1}$, where $p_{1:k}$ is the probability that the mutation operator flips between one and $k$ bits. Apparently, here the details of the mutation operator are not very important -- only the probability to flip between one and $k$ bits has an influence on the runtime.

The only precise runtime analysis for an algorithm with a non-trivial population can be found in~\cite{GiessenW17}, where the runtime of the \oplea with mutation rate $\frac cn$, $c$ a constant, on \onemax was shown to be $(1+o(1))(\frac{e^c}{c} n \ln n + n \lambda \frac{\ln\ln\lambda}{2 \ln \lambda})$. This result has the surprising implication that here the mutation rate is only important when $\lambda$ is small.

The only precise runtime analysis for a multi-modal objective function was conducted in~\cite{DoerrLMN17}, where the runtime of the \oea with arbitrary mutation rate was determined for jump functions; this work led to the development of a heavy-tailed mutation operator that appears to be very successful~\cite{FriedrichQW18,FriedrichGQW18,FriedrichGQW18heavysubm,WuQT18,AntipovBD20gecco}.

In summary, there is only a small number of precise runtime analyses, but many of them could obtain insights that would not have been possible with less precise analyses. 

\emph{Our result}, an analysis of the \mclea on jump functions that is precise for $k \le 0.1n$, $\lambda = o(n^{k-1})$, $\lambda \ge (1+\eps) e \mu$, and $\lambda = \Omega(\log n)$ sufficiently large, is the second precise analysis for a population-based algorithm (after~\cite{GiessenW17}),  is the second precise analysis for a multimodal fitness function (after~\cite{DoerrLMN17}), and is the first precise analysis for a non-elitist algorithm (apart from fact that the result~\cite{GiessenW17} could be transfered to the $\oclea$ for large $\lambda$ via the argument~\cite{JagerskupperS07} that in this case the \oplea and the \oclea have essentially identical performances). 

\subsection{Methods: Negative Drift and Level-based Analyses}\label{sec:methods}

To obtain our results, we also develop new techniques for two classic topics, namely the analysis of processes showing a drift away from the target (``negative drift'') and the analysis of non-elitist population processes via level-based arguments.

\subsubsection{Negative Drift}

It is natural that a stochastic process $X_0, X_1, \ldots$ finds it hard to reach a certain target when the typical behavior is taking the process away from the target. \emph{Negative drift theorems} are an established tool for the analysis of such situations. They roughly speaking state the following. Assume that the process starts at some point $b$ or higher, that is, $X_0 \ge b$, and that we aim at reaching a target $a < b$. Assume that whenever the process is above the target value, that is, $X_t > a$, we have an expected progress $E[X_{t+1} - X_t] \ge \delta$, $\delta$ some constant, away from the target, and that this progress satisfies some concentration assumption like two-sided exponential tails. Then the expected time to reach or undershoot the target is at least exponential in the distance $b-a$. 

The first such result in the context of evolutionary algorithms was shown by Oliveto and Witt~\cite{OlivetoW11} (note the corrigendum~\cite{OlivetoW12}). Improved versions were subsequently given in~\cite{RoweS14,OlivetoW15,Kotzing16,LenglerS18,Witt19}. The comprehensive survey~\cite[Section~2.4.3]{Lengler20bookchapter} gives a complete coverage of this topic. What is important to note for our purposes is that (i)~all existing negative drift results are quite technical to use due to the concentration assumptions, that (ii)~they all give a lower bound that is only exponential in the length of the interval in which the (constant) negative drift is observed, and that (iii)~they all (apart from the technical work of Hajek~\cite{Hajek82}) do not give tight bounds, but only bounds of type $\exp(\Omega(b-a))$ with the implicit constant in the exponent not specified.

Earlier than the general negative drift theorem, Lehre~\cite{Lehre10} proved a negative drift theorem for population-based processes via multi-type branching processes. Just as the general negative drift theorems described above, it only gives lower bounds exponential in the length of the negative drift regime and the base of the exponential function is not made explicit. Consequently, in~\cite[Theorem~5]{Lehre10} (Theorem~\ref{thm:oldjumpLB} in this work), only an $\exp(\Omega(k))$ lower bound for the runtime of the \mclea on $\jump_{nk}$ was derived

Since we aim at an $\Omega(n^k)$ lower bound caused by a negative drift in the short gap region (of length $k$) of the jump function, and since further we aim at results that give the precise leading constant of the runtime, we cannot use these tools. We therefore resort to the \emph{additive drift applied to a rescaled process} argument first made explicit in~\cite{AntipovDY19}. The basic idea is very simple: For a suitable function $g : \R \to \R$ one regards the process $(g(X_t))_t$ instead of the original process $(X_t)_t$, shows that it makes at most a slow progress towards the target, say $E[g(X_{t+1}) - g(X_t) \mid X_t > a] \ge -\delta$, and concludes from the classic additive drift theorem~\cite{HeY01} (Theorem~\ref{tadddrift} in this work) that the expected time to reach or undershoot $a$ when starting at $b$ is at least $\frac{g(b) - g(a)}{\delta}$. While the basic approach is simple and natural, the non-trivial part is finding a rescaling function $g$ which both gives at most a slow progress towards the target and gives a large difference $g(b) - g(a)$. The rescalings used in~\cite{AntipovDY19} and~\cite{Doerr19foga} were both of exponential type, that is, $g$ was roughly speaking an exponential function. By construction, they only led to lower bounds exponential in $b-a$, and in both cases the lower bound was not tight (apart from being exponential).

\emph{Our progress:} Hence the technical novelty of this work is that we devise a rescaling for our problem that (i)~leads to a lower bound of order $n^k$ for a process having negative drift only is an interval of length $k$, and (ii) such that these lower bounds are tight including the leading constant. Clearly, our rescalings (as all rescalings used previously) are specific to our problem. Nevertheless, they demonstrate that the rescaling method, different from the classic negative drift theorems, can give very tight lower bounds and lower bounds that are super-exponential in the length of the interval in which the negative drift is observed. We are optimistic that such rescalings will find other applications in the future.

Note added in proof: For reasons of completeness of this discussion on lower bound methods, we note that between the submission of this work in May~2020 and the first notification in June 2021, a further lower bound method called \emph{negative multiplicative drift} was proposed~\cite{Doerr20ppsnLB}. Different from what a reviewer suggests, it is not in any way related to the rescaling method. While we do not want to rule out that it can also be employed to prove our lower bound, it is clear that this would either also need a rescaling of the process (and then our approach appears more direct) or it would need estimates on the change of the maximum \onemax-value in the population that are substantially different from ours in the proof of Theorem~\ref{thm:lower}.

\subsubsection{Level-based Analyses}

While level-based arguments for the analysis of non-elitist algorithms have been used much earlier, see, e.g.,~\cite{Eremeev99}, the fitness-level analysis of Lehre~\cite{Lehre11} might still be the first general method to analyze non-elitist population-based processes. We gave a high-level description of this method in Section~\ref{sssec:nonelhigh} and we will give a more detailed discussion in Section~\ref{ssec:level} to enable us to prove our upper bound. For this reason, we now explain without further explanations what is our progress over the state of the art of this method.

Similar to the state of the art in negative drift theorems, all existing variants of the level-based methods do not give results that are tight including the leading constant. Also, from the complexity of the proofs of these results, it appears unlikely that such tight results can be obtained in the near future. 

\emph{Our progress:} For our problem of optimizing jump functions, we can exploit the fact that the most difficult, and thus time consuming, step is generating the global optimum from a population that has fully converged into the local optimum. To do so, we use the non-tight level-based methods only up to the point when the population only consists of local optima (we call this an \emph{almost perfect population}). This can be done via a variation of the existing level-based results (Corollary~\ref{cor:level}). From that point on, we estimate the remaining runtime by computing the waiting time for generating the optimum from a local optimum. Of course, since we are analyzing a non-elitist process, we are not guaranteed to keep an almost perfect population. For that reason, we also need to analyze the probability of losing an almost perfect population and to set up a restart argument to regain an almost perfect population. Naturally, this has to be done in a way that the total runtime spent here is only a lower-order fraction of the time needed to generate the global optimum from an almost perfect population. 

A side effect of this approach is that we only need a logarithmic offspring population size, that is, it suffices to have $\lambda \ge C \ln(n)$ for some constant $C$ that is independent of the jump size $k$. This is different from using the level-based methods for the whole process, as done in the proof of Theorem~\ref{thm:oldjump}, which would require an offspring population size at least logarithmic in the envisaged runtime, hence here $\Omega(\log n^k) = O(k \log n)$, which is super-logarithmic when $k$ is super-constant. 

While our arguments exploit some characteristics of the jump functions, we are optimistic that they can be employed for other problems as well, in particular, when the optimization process typically contains one step that is more difficult than the remaining optimization.

\section{Preliminaries}

In this section, we define the algorithm and the optimization problem regarded in this paper together with the most relevant works on these.

\subsection{The \mclea}\label{ssec:algo}

The \mclea for the maximization of pseudo-Boolean functions $f : \{0,1\}^n \to \R$ is made precise in Algorithm~\ref{alg:algo}. It is a simple non-elitist algorithm working with a parent population of size $\mu$ and an offspring population of size $\lambda \ge \mu$. Here and in the remainder by a population we mean a multiset of individuals (elements from the search space $\{0,1\}^n$). Each offspring is generated by selecting a random parent (independently and with replacement) from the parent population and mutating it via standard bit mutation, that is, by flipping each bit independently with probability $1/n$.\footnote{To ease the presentation, we only consider the standard mutation rate $1/n$, but we are confident that our results in an analogous fashion hold for general mutation rates $\chi/n$, $\chi$ a constant. Previous works have shown that the constant $\chi$ has an influence (again by constant factors) on where the boundary between the ``imitating elitism'' and ``no efficient progress'' regimes is located. Since our result is that the \mclea for no realistic parameter settings beats the \mplea, we do not expect that a constant factor change of the mutation rate leads to substantially different findings.} The next parent population consists of those $\mu$ offspring which have the highest fitness (breaking ties arbitrarily). 
	
\begin{algorithm2e}%
	Initialize $P_0$ with $\mu$ individuals chosen independently and uniformly at random from $\{0,1\}^n$\;
	\For{$t = 1, 2, \ldots$}{
    \For{$i \in [1..\lambda]$}{
      Select $x_i \in P_{t-1}$ uniformly at random\;
      Generate $y_i$ from $x_i$ via standard bit mutation\;
      }
    Select $P_t$ from the multi-set $\{y_1, \ldots, y_\lambda\}$ by choosing $\mu$ individuals of highest $f$-value (breaking ties arbitrarily)\;
  }
\caption{The \mclea to maximize a function $f : \{0,1\}^n \to \R$.}
\label{alg:algo}
\end{algorithm2e}

The \mplea, to which we compare the \mclea, differs from the \mclea only in the selection of the next generation. Whereas the \mclea selects the next generation only from the offspring population (comma selection), the \mplea selects it from the parent and offspring population (plus selection). In other words, to obtain the \mplea from Algorithm~\ref{alg:algo}, we only have to replace the selection by ``select $P_t$ from the multi-set $P_{t-1} \cup \{y_1, \ldots, y_\lambda\}$ by choosing $\mu$ best individuals (breaking ties arbitrarily)''. Often, the tie breaking is done by giving preference to offspring, but for all our purposes there is no difference.

When talking about the performance of the \mclea or the \mplea, as usual in runtime analysis~\cite{AugerD11,NeumannW10,Jansen13,DoerrN20}, we count the number of fitness evaluations until for the first time an optimal solution is evaluated. We assume that each individual is evaluated immediately after being generated. Consequently, if an optimum is generated in iteration $t$, then the runtime $T$ satisfies
\begin{equation}\label{eq:runtime}
 \mu + (t-1)\lambda + 1 \le T \le \mu + t \lambda.
\end{equation} 

Since we described the most important results on the \mclea already in Section~\ref{ssec:nonel}, let us briefly mention the most relevant results for the \mplea. Again, due to the difficulties in analyzing population-based algorithms, not too much is known. The runtimes of the \oplea, among others on \onemax and \leadingones, were first analyzed in~\cite{JansenJW05}. The asymptotically tight runtime on \onemax for all polynomial $\lambda$ was determined in~\cite{DoerrK15}, together with an analysis on general linear functions. In~\cite{Witt06}, the runtime of the \mpoea on \onemax and \leadingones, among others, was studied. The runtime of the \mplea with both non-trivial parent and offspring population sizes on the \onemax function was determined in~\cite{AntipovDFH18}.

\subsection{The Jump Function Class}\label{sec:jump}

To define the jump functions, we first recall that the $n$-dimensional \emph{$\onemax$ function} is defined by 
\[\OM(x) = \|x\|_1 = \sum_{i=1}^n x_i\] 
for all $x \in \{0,1\}^n$

Now the $n$-dimensional \emph{jump function} with \emph{jump parameter (jump size)} $k \in [1..n]$ is defined by
\[
\jump_{nk}(x) = 
\begin{cases}
\|x\|_1+k & \mbox{if $\|x\|_1 \in [0..n-k] \cup \{n\}$,}\\
n - \|x\|_1 & \mbox{if $\|x\|_1 \in [n-k+1\, ..\, n-1]$}.
\end{cases}
\]
Hence for $k = 1$, we have a fitness landscape isomorphic to the one of $\onemax$, but for larger values of $k$ there is a fitness valley (``gap'')
\[G_{nk} \coloneqq \{x \in \{0,1\}^n \mid n-k < \|x\|_1 < n\}\] 
consisting of the $k-1$ highest sub-optimal fitness levels of the \onemax function. This valley is hard to cross for evolutionary algorithms using standard bit mutation. When using the common mutation rate $\frac 1n$, the probability to generate the optimum from a parent on the local optimum is only $p_k := (1-\frac 1n)^{n-k} n^{-k} < n^{-k}$. For this reason, e.g., the classic $(\mu+\lambda)$ EA has a runtime of at least $n^k$ when $k$ is not excessively large. This was proven formally for the \oea in the classic paper~\cite{DrosteJW02}, but the argument can easily be extended to all $(\mu+\lambda)$ EAs (as we do now for reasons of completeness). We also prove an upper bound, which will later turn out to agree with our lower bound for the \mclea for large ranges of the parameters. 

\begin{theorem}\label{thm:plus}
  Let $\mu, \lambda \in \Z_{\ge 1}$. Let $n \in \Z_{\ge 2}$ and $k \in [2..n]$. Let $p_k := (1-\frac 1n)^{n-k} n^{-k}$. Let $T$ denote the runtime, measured by the number of fitness evaluations until the optimum is found, of the $(\mu+\lambda)$ EA on the $\jump_{nk}$ function.
\begin{enumerate}
\item Let $h(n):= \sqrt{2n \log(\mu n)})$. If $k \le \frac n2 - h(n)$, then 
\[E[T] \ge  \left(1-\frac 1n\right) \left(\mu + \frac{1}{p_k}\right),\] 
otherwise $E[T] \ge (1 - \frac 1n) \left(\mu + \frac{1}{p_{k'}}\right)$ with $k' := \frac n2 - h(n)$. 
\item $E[T] \le \frac{1}{p_k} + O\left(n \log n + n\mu + n \lambda \frac{\log^+ \log^+ (\lambda/\mu)}{\log^+ (\lambda/\mu)} + (\mu+\lambda) \log \mu\right)$, where we write $\log^+ x := \max\{1,\ln x\}$ for all $x > 0$. If ${\mu \le \lambda}$, $\lambda = \exp(O(n))$, and $\lambda = o(\frac 1{n p_k})$, then $E[T] \le (1+o(1)) \frac{1}{p_k}$.
\end{enumerate}
\end{theorem}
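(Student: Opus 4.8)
The plan is to treat the two parts separately, using classical arguments adapted to the $(\mu+\lambda)$~EA.

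For the lower bound (part~(i)), the key observation is that once a population member lies in the gap region or on the local optimum $\|x\|_1 = n-k$, plus selection guarantees the population never again contains a search point of OneMax-value strictly between $n-k$ and $n$ (such points have fitness below $k \le \jump_{nk}$-value of anything with $\|x\|_1\le n-k$). Hence, after the first time $t_0$ that a search point with $\|x\|_1 \ge n-k$ appears, the only way to create the optimum is to flip, in a single mutation, all of the $\ge k$ missing (or, for individuals on the local optimum, exactly $k$) zero-bits of some parent; the probability of this is at most $n^{-k}$, and for a parent on the local optimum exactly $p_k$. I would first argue that with probability $1-o(1)$ the initial population contains no individual with $\|x\|_1 \ge n-k$ when $k \le n/2 - h(n)$: each of the $\mu$ initial individuals has $\|x\|_1$ binomially distributed with mean $n/2$, so by a Chernoff bound the probability any one reaches $n-k$ is at most $\exp(-h(n)^2/(2n)) \cdot(\text{poly})$; the choice $h(n)=\sqrt{2n\log(\mu n)}$ makes the union bound over $\mu$ individuals $O(1/n)$. (When $k$ is larger, one replaces $k$ by $k'=n/2-h(n)$, noting $\jump$ is only harder for larger jumps.) Conditioned on this event, before $t_0$ the optimum can only be produced by a direct jump of probability $< n^{-k} \le p_k \cdot (1-1/n)^{-(n-k)}$... more carefully, before the gap is entered every parent has $\|x\|_1 \le n-k$ wait — I should instead just say: in every iteration, the probability that \emph{any} of the $\lambda$ offspring is the optimum is at most $\lambda \cdot n^{-k}$ only if a parent sits on the local optimum, and at most smaller otherwise; combined with the fact that the process is, until the optimum is found, dominated by a geometric waiting time. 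The cleanest route: let $q$ be the per-iteration optimum-generation probability; conditioned on the good initialization event, $q \le \lambda p_k$ in \emph{every} iteration, because the most favorable parent for jumping to the optimum is one on the local optimum (any parent with $\|x\|_1 < n-k$ needs to flip more than $k$ specific bits, which is less likely). Then $T$ stochastically dominates $\mu + (\Geom(q)-1)\lambda + 1 \ge \mu + \lambda(1/q - 1)\cdot(\text{in expectation})$, and $E[T] \ge (1-o(1))(\mu + 1/p_k)$ follows after absorbing the $1-1/n$ factor from the initialization event. Here the small subtlety to handle carefully is the interplay of the $\lambda$ offspring per generation with the counting of fitness evaluations via~\eqref{eq:runtime}; one must verify $\mu + \lambda(1/(\lambda p_k) - 1) \ge (1-1/n)(\mu + 1/p_k)$ up to the claimed lower-order loss, which works because $1/p_k$ dominates.

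For the upper bound (part~(ii)), the strategy is a two-phase argument. Phase~1: reach a population that contains an individual on the local optimum (OneMax-value $n-k$) or the optimum itself. On the sub-OneMax part $\|x\|_1\in[0..n-k]$, $\jump_{nk}$ is isomorphic to (a truncation of) OneMax, so the \mplea behaves exactly as it would optimizing OneMax restricted to this range. Hence I would invoke the known precise/asymptotic runtime analysis of the \mplea on OneMax from~\cite{AntipovDFH18} (and, for the $(1+\lambda)$ special structure, \cite{DoerrK15,GiessenW17}) to bound the expected time for Phase~1 by $O\!\left(n\log n + n\mu + n\lambda \frac{\log^+\log^+(\lambda/\mu)}{\log^+(\lambda/\mu)} + (\mu+\lambda)\log\mu\right)$; the $(\mu+\lambda)\log\mu$ and $n\mu$ terms come from population-initialization/takeover overhead. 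Phase~2: from a population containing a local optimum, by elitism a local (or the global) optimum is retained forever, so in each subsequent iteration each offspring generated from a local-optimum parent is the global optimum with probability exactly $p_k$; since at least one parent is always a local optimum, the per-iteration success probability is at least $1-(1-p_k)^{\lambda'} \ge \lambda' p_k/2$ where $\lambda'$ is the expected number of offspring chosen from local-optimum parents — but more simply, it is at least $p_k$ per \emph{offspring slot} that picks such a parent, and in expectation $\lambda/\mu \ge$ one such slot, giving expected $O(1/(\mu p_k)\cdot\mu) = O(1/p_k)$ fitness evaluations, and more precisely $\le (1+o(1))/p_k$ when $\lambda = o(1/(n p_k))$ so that Phase~1 and the granularity-$\lambda$ overshoot are both $o(1/p_k)$. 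Adding the two phases and noting the Phase-1 bound is $o(1/p_k)$ under the stated conditions yields $E[T]\le (1+o(1))/p_k$.

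The main obstacle is getting Phase~2 tight including the constant: one must show that the probability that \emph{more than one} distinct OneMax-level $\ge n-k$ individual coexists (which could, in principle, also include individuals on the local optimum versus a not-yet-found optimum — but the optimum ends the run) does not inflate the waiting time, and that the event of the first gap-entering individual already sitting \emph{on} the local optimum (rather than strictly inside the gap, from which escape is $\le n^{-k} < p_k$ for individuals needing $>k$ flips — actually individuals in the gap have $\|x\|_1 > n-k$, so they need \emph{fewer than} $k$ flips... no, wait: gap individuals have $n-k<\|x\|_1<n$, fitness $n-\|x\|_1$, and plus-selection will discard them in favor of local optima, so they are irrelevant) — so the real care is simply ensuring that after Phase~1 there is always at least one local-optimum parent and that the number of offspring sampled from it concentrates, so the per-iteration success probability is $(1+o(1))$ times what a clean $\Geom$ analysis gives. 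This concentration is routine (Chernoff on a $\Bin(\lambda,\mu_0/\mu)$ with $\mu_0\ge 1$) but must be stated; combined with $\lambda = o(1/(np_k))$ it closes the gap. I expect the lower-bound direction to be comparatively straightforward modulo the initialization Chernoff bound and the bookkeeping in~\eqref{eq:runtime}.
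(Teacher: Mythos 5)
Your part~(i) argument is essentially the paper's: a Chernoff--union bound shows that with probability at least $1-\frac 1n$ all $\mu$ initial individuals have $\OM$-value at most $n-k'$ with $k'=\min\{k,\frac n2-h(n)\}$; by plus selection no parent ever exceeds this (gap individuals have fitness below that of the initial population), so every evaluated offspring is the optimum with probability at most $(1-\frac 1n)^{n-\ell}n^{-\ell}\le p_{k'}$ for its $\ell\ge k'$ missing bits, and the number of evaluations dominates a geometric random variable. The paper counts per fitness evaluation rather than per iteration, which disposes of the bookkeeping around~\eqref{eq:runtime} that you worry about, but this is cosmetic.

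Part~(ii) has a genuine gap in Phase~2. You start Phase~2 from ``a population containing a local optimum'' and compute the success probability from the at least one guaranteed local-optimum parent: each offspring slot selects that parent with probability $\frac 1\mu$, so the per-evaluation success probability is only $\frac{p_k}{\mu}$ and the resulting waiting time is $\frac{\mu}{p_k}$ fitness evaluations, not $O(\frac{1}{p_k})$ --- your step ``$O(1/(\mu p_k)\cdot\mu)=O(1/p_k)$'' conflates iterations and evaluations and does not follow from the setup. A concentration argument on $\Bin(\lambda,\mu_0/\mu)$ with $\mu_0\ge 1$, as you propose at the end, cannot repair this: to get the additive term $\frac{1}{p_k}$ (let alone with leading constant $1$) you need $\mu_0=(1-o(1))\mu$, i.e.\ essentially the \emph{whole} parent population on the local optimum, so that every offspring is the optimum with probability exactly $p_k$. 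This is precisely why the paper inserts an explicit takeover phase between your Phase~1 and Phase~2: by \cite[Lemma~2]{Sudholt09}, once one local optimum is present, elitism lets it take over the entire parent population within $O((\mu+\lambda)\log\mu)$ further evaluations --- this is where the $(\mu+\lambda)\log\mu$ term in the statement is actually \emph{used}, not merely where it ``comes from'' as overhead. You name the term but never deploy the takeover where the argument needs it; without that step the claimed bound $E[T]\le \frac{1}{p_k}+O(\cdots)$ and the tight $(1+o(1))\frac{1}{p_k}$ conclusion are not established. The rest of your Phase~1 (invoking \cite{AntipovDFH18} for the time to first reach the local optimum) matches the paper.
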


\begin{proof}
  To cover both cases, let $k' = \min\{k, \frac n2 - h(n)\}$. Using the additive Chernoff bound (Theorem~\ref{tprobchernoffadditive01}) and a union bound, we see that with probability at least 
  \[1 - \mu \exp\left(-\frac{(\frac n2 -k')^2}{2n}\right) \ge 1 - \mu \exp\left(-\frac{h(n)^2}{2n}\right) \ge 1 - \frac 1n\]
  all $\mu$ initial individuals $x$ satisfy $\OM(x) \le n - k'$. Conditioning on this, in the remaining run all individuals that are taken into the parent population also satisfy $\OM(x) \le n - k'$ (unless they are the optimum). Consequently, for an offspring to become the first optimum sampled, there is a unique set of $\ell \ge k'$ bits in the parent that need to be flipped (and the other bits may not be flipped). The probability for this event is $(1-\frac 1n)^{n-\ell} (\frac 1n)^\ell \le (1-\frac 1n)^{n-k'} (\frac 1n)^{k'} = p_{k'}$. Hence the time until this happens is stochastically dominated (see, e.g.,~\cite{Doerr19tcs}) by a geometric distribution with success probability $p_{k'}$, which has an expectation of $\frac 1{p_{k'}}$. Together with the $\mu$ initial fitness evaluations, this shows the lower bound. 
  
  For the upper bound, we use a recent analysis of the runtime of the \mplea on \onemax. In~\cite{AntipovDFH18}, it was shown that the \mplea finds the optimum of \onemax in an expected number of 
  \[O\left(n \log n + n\mu + n \lambda \frac{\log^+ \log^+ (\lambda/\mu)}{\log^+ (\lambda/\mu)}\right)\]
fitness evaluations (the result is stated in terms of iterations in~\cite{AntipovDFH18}, but with~\eqref{eq:runtime} one immediately obtains the form above). It is easy to see from the proof in~\cite{AntipovDFH18} that this bound also holds for the expected time until the \mplea optimizing any jump function as found an individual on the local optimum (if it has not found the optimum before). 

What cannot be taken immediately from the previous work is remainder of the runtime analysis. In particular, since generating the optimum from the local optimum is more difficult than generating an individual on the next \onemax fitness level, we need a larger number of individuals on the local optimum before we have a reasonable chance of making progress. Since we mostly aim at a good upper bound in the regime where $\mu$ and $\lambda$ are not excessively large, we allow for the time until the whole population is on the local optimum or better. By~\cite[Lemma~2]{Sudholt09}, this takeover time is $O((\mu+\lambda) \log \mu)$ fitness evaluations (or the optimum is found earlier). From this point on, any further individual has a probability of exactly $p_k$ of being the optimum, giving an additional $\frac 1 {p_k}$ term for the runtime bound. This shows the general upper bound. If $\mu \le \lambda$, $\lambda = \exp(O(n))$ and $\lambda = o(\frac{1}{np_k})$, then $(\mu+\lambda)\log \mu = O(\lambda \log \lambda) = O(n \lambda) = o(1/p_k)$. For similar reasons, the expressions $n\mu$ and $n \lambda \frac{\log^+ \log^+ (\lambda/\mu)}{\log^+ (\lambda/\mu)}$ are of lower order. Since $k \ge 2$, we have $p_k = \Omega(n^2)$, and thus also the $n \log n$ expression is of lower order. 
\end{proof}

By using larger mutation rates or a heavy-tailed mutation operator, the runtime of the \oea can be improved by a factor of $k^{\Theta(k)}$~\cite{DoerrLMN17}, but the runtime remains $\Omega(n^k)$ for $k$ constant. 

Asymptotically better runtimes can be achieved when using crossover, though this is not as easy as one might expect. The first work in this direction~\cite{JansenW02}, among other results, showed that a simple $(\mu+1)$ genetic algorithm using uniform crossover with rate $p_c = O(\frac{1}{kn})$ has an $O(\mu n^2 k^3 + 2^{2k} p_c^{-1})$ runtime when the population size is at least $\mu = \Omega(k \log n)$. A~shortcoming of this result, as noted by the authors, is that it only applies to uncommonly small crossover rates. Using a different algorithm that first applies crossover and then mutation, a runtime of $O(n^{k-1} \log n)$ was achieved by Dang et al.~\cite[Theorem~2]{DangFKKLOSS18}. For $k \ge 3$, the logarithmic factor in the runtime can be removed by using a higher mutation rate. With additional diversity mechanisms, the runtime can be further reduced down to $O(n \log n + 4^k)$, see~\cite{DangFKKLOSS16}. The \opllga can optimize $\jump_k$ in time $O(n^{(k+1)/2} k^{-\Omega(k)})$~\cite{AntipovDK20}.

With a three-parent majority vote crossover, among other results, a runtime of $O(n \log n)$ could be obtained via a suitable island model for all $k = O(n^{1/2 - \eps})$~\cite{FriedrichKKNNS16}. A different voting algorithm also giving an $O(n \log n)$ runtime was proposed in~\cite{RoweA19}. Via a hybrid genetic algorithm using as variation operators only local search and a deterministic voting crossover, an $O(n)$ runtime was shown in~\cite{WhitleyVHM18}. 

Runtimes of $O\left(n \binom{n}{k}\right)$ and $O\left(k \log(n) \binom{n}{k}\right)$ were shown for the $(1+1)$~IA$^{\mathrm{hyp}}$ and the $(1+1)$ Fast-IA artificial immune systems, respectively~\cite{CorusOY17,CorusOY18fast}. In~\cite{LissovoiOW19}, the runtime of a hyper-heuristic switching between elitist and non-elitist selection was studied. The lower bound of order $\Omega(n \log n) + \exp(\Omega(k))$ and the upper bound of order $O(n^{2k-1}/k)$, however, are too far apart to indicate an advantage or a disadvantage over most classic algorithms. In this work, it is further stated that the Metropolis algorithm (using the 1-bit neighborhood) has an $\exp(\Omega(n))$ runtime on jump functions.  

Without diversity mechanisms and non-standard operators, the compact genetic algorithm, a simple estimation-of-distribution algorithm, has a runtime  of $O(n \log n + 2^{O(k)})$~\cite{HasenohrlS18,Doerr19gecco}.

\section{Technical Tools}

In this section, we collect a few technical tools that will be used in our proofs. All but the last one, an elementary non-asymptotic lower bound for the probability to generate an offspring with equal \onemax fitness, are standard tools in the field. 

Let $X$ be a binomially distributed random variable with parameters $n$ and $p$, that is, $X = \sum_{i=1}^n X_i$ with independent $X_i$ satisfying $\Pr[X_i = 1] = p$ and $\Pr[X_i = 0] = 1-p$. Since $X$ is a sum of independent binary random variables, Chernoff bounds can be used to bound its deviation from the expectation. However, the following elementary estimate also does a good job. This estimate appears to be well-known (e.g., it was used in~\cite{JansenJW05} without proof or reference). Elementary proofs can be found in~\cite[Lemma~3]{GiessenW17} or~\cite[Lemma~1.10.37]{Doerr20bookchapter}.

\begin{lemma}\label{lprobbino}
  Let $X \sim \Bin(n,p)$. Let $k \in [0..n]$. Then \[\Pr[X \ge k] \le \binom{n}{k} p^k.\]
\end{lemma}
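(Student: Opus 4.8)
The plan is to use a simple union bound over $k$-element subsets of the index set, exploiting the representation $X = \sum_{i=1}^n X_i$ with independent indicators. First I would observe that the event $\{X \ge k\}$ is contained in the union $\bigcup_{S} E_S$, where $S$ ranges over all $k$-element subsets of $[1..n]$ and $E_S$ denotes the event that $X_i = 1$ for all $i \in S$: indeed, if $X \ge k$ then at least $k$ of the indicators equal $1$, so there is at least one $k$-set $S$ of indices on which all indicators are $1$, i.e.\ $E_S$ occurs. By independence, $\Pr[E_S] = p^k$ for every such $S$, and there are $\binom{n}{k}$ of them. A union bound then gives $\Pr[X \ge k] \le \sum_S \Pr[E_S] = \binom{n}{k} p^k$, which is exactly the claim.

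An alternative route, which I would mention as a remark, is via Markov's inequality applied to the integer-valued random variable $\binom{X}{k}$: since $\binom{X}{k} \ge 1$ whenever $X \ge k$, we have $\Pr[X \ge k] \le \Pr[\binom{X}{k} \ge 1] \le E\!\left[\binom{X}{k}\right]$, and the standard combinatorial identity $E\!\left[\binom{X}{k}\right] = \binom{n}{k} p^k$ for $X \sim \Bin(n,p)$ (seen by noting $\binom{X}{k}$ counts the $k$-subsets of successes and using linearity of expectation over all $\binom{n}{k}$ subsets) finishes the argument. Both proofs are essentially the same computation viewed from two angles.

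There is no real obstacle here; the only point requiring (minimal) care is the verification of the event inclusion $\{X \ge k\} \subseteq \bigcup_S E_S$ and the handling of the boundary cases $k = 0$ (where the bound reads $\Pr[X \ge 0] \le 1$, trivially true) and $k = n$. The estimate is deliberately crude for large $k$ but is convenient precisely because it is non-asymptotic and makes no assumption relating $k$ to $np$, which is why it suits the later applications in this paper.
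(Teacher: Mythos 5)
Your union-bound argument is correct and is precisely the standard elementary proof of this estimate; the paper itself does not prove the lemma but defers to the references \cite{GiessenW17} and \cite{Doerr20bookchapter}, which argue exactly along these lines (bounding the probability of $\{X \ge k\}$ by the expected number of $k$-subsets of indices on which all indicators are $1$). Your Markov-inequality remark and the boundary-case checks are fine as well; nothing is missing.
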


The following \emph{additive Chernoff bound} from Hoeffding~\cite{Hoeffding63}, also to be found, e.g., in~\cite[Theorem~1.10.7]{Doerr20bookchapter}, provides a different way to estimate the probability that a binomial random variable and, in fact, any sum of bounded independent random variables exceeds its expectation.

\begin{theorem}\label{tprobchernoffadditive01}
  Let $X_1, \ldots, X_n$ be independent random variables taking values in $[0,1]$. Let $X = \sum_{i = 1}^n X_i$. Then for all $\lambda \ge 0$,  
  \begin{align*}
    \Pr[X \ge E[X] + \lambda] & \le \exp\bigg(-\frac{2\lambda^2}{n}\bigg).
  \end{align*}
\end{theorem}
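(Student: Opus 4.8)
The plan is to use the exponential moment method (the Chernoff--Cram\'er technique). Fix any $s > 0$. Since $t \mapsto e^{st}$ is increasing and positive, Markov's inequality applied to the nonnegative random variable $e^{s(X - E[X])}$ gives
\[
\Pr[X \ge E[X] + \lambda] = \Pr\big[e^{s(X - E[X])} \ge e^{s\lambda}\big] \le e^{-s\lambda}\, E\big[e^{s(X - E[X])}\big].
\]
By independence of the $X_i$, the moment generating function factorizes as $E[e^{s(X-E[X])}] = \prod_{i=1}^n E[e^{s(X_i - E[X_i])}]$, so it suffices to bound each factor and then optimize over $s$.

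The heart of the argument is \emph{Hoeffding's lemma}: any random variable $Y$ with $Y \in [a,b]$ almost surely and $E[Y] = 0$ satisfies $E[e^{sY}] \le \exp(s^2(b-a)^2/8)$ for every real $s$. I would prove this by considering $\psi(s) := \ln E[e^{sY}]$ and verifying $\psi(0) = 0$, $\psi'(0) = E[Y] = 0$, and $\psi''(s) = \mathrm{Var}_{\mu_s}[Y]$, the variance of $Y$ under the exponentially tilted law $\mu_s$ defined by $d\mu_s \propto e^{sy}\, d\mu(y)$; since $\mu_s$ is again supported on $[a,b]$, Popoviciu's inequality yields $\psi''(s) \le (b-a)^2/4$. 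A second-order Taylor expansion of $\psi$ around $0$ then gives $\psi(s) \le s^2(b-a)^2/8$. (Alternatively one can bypass the tilting language: convexity of $t \mapsto e^{st}$ gives $e^{sy} \le \frac{b-y}{b-a}e^{sa} + \frac{y-a}{b-a}e^{sb}$, take expectations, and optimize the resulting one-variable function.) Applying this with $Y = X_i - E[X_i]$, which lies in an interval of length $b-a = 1$ because $X_i \in [0,1]$, gives $E[e^{s(X_i - E[X_i])}] \le e^{s^2/8}$.

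Putting the pieces together, $\Pr[X \ge E[X] + \lambda] \le \exp(-s\lambda + ns^2/8)$ for every $s > 0$. The exponent is a convex quadratic in $s$, minimized at $s = 4\lambda/n$, where its value is $-2\lambda^2/n$; substituting yields the claimed bound $\exp(-2\lambda^2/n)$, and the case $\lambda = 0$ is trivial.

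The step I expect to be the main obstacle is Hoeffding's lemma with the sharp constant $1/8$. The naive estimate $\psi''(s) \le (b-a)^2$ (bounding the variance crudely by the squared range) only produces $\exp(s^2(b-a)^2/2)$ and hence the weaker final constant $\exp(-\lambda^2/(2n))$; obtaining the factor $2$ in the exponent genuinely requires the variance bound $\mathrm{Var}_{\mu_s}[Y] \le (b-a)^2/4$ or the equivalent convexity-plus-optimization computation. The remaining steps — the Markov/exponential inequality, the product over independent factors, and the quadratic optimization over $s$ — are routine.
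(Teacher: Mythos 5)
Your proof is correct and is exactly the classical argument (exponential moment method plus Hoeffding's lemma with the sharp constant $1/8$, then optimization over $s$) behind the result the paper cites from Hoeffding's 1963 paper rather than proving itself. The computation checks out, including the minimizer $s = 4\lambda/n$ yielding the exponent $-2\lambda^2/n$, and you correctly identify that the sharp variance bound $(b-a)^2/4$ under the tilted measure is the one non-routine ingredient.
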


As part of our \emph{additive drift with rescaling} lower bound proof strategy, we need the following additive drift theorem of He and Yao~\cite{HeY01}, see also~\cite[Theorem~2.3.1]{Lengler20bookchapter}, which allows to translate a uniform upper bound on an expected progress into a lower bound on the expected time to reach a target.
 
\begin{theorem}\label{tadddrift}
  Let $S \subseteq \R_{\ge 0}$ be finite and $0 \in S$. Let $X_0, X_1, \ldots$ be a stochastic process taking values in $S$. Let $\delta > 0$. Let $T = \inf\{t \ge 0 \mid X_t = 0\}$. If for all $t \ge 0$ and all $s \in S \setminus \{0\}$ we have $E[X_t - X_{t+1} \mid X_t = s] \le \delta$, then $E[T] \ge \frac{E[X_0]}{\delta}$.
\end{theorem}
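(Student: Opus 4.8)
The plan is to run the classical stopped-process telescoping argument for additive drift. First I would dispose of the trivial case: if $E[T] = \infty$ there is nothing to prove (the right-hand side $E[X_0]/\delta$ is finite since $S$ is bounded and $\delta > 0$), so from now on assume $E[T] < \infty$, which forces $T < \infty$ almost surely. The object to track is the stopped process $(X_{t \wedge T})_{t \ge 0}$, and I would establish the two facts $E[X_{0 \wedge T}] = E[X_0]$ (immediate) and $\lim_{t \to \infty} E[X_{t \wedge T}] = 0$. The latter holds because $S$ is finite, so every $X_t$ lies in the bounded set $S$, and because $T < \infty$ a.s.\ we have $X_{t \wedge T} \to X_T = 0$ pointwise; dominated convergence then yields the claimed limit. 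Combining these via a telescoping sum (whose partial sums collapse to $E[X_0] - E[X_{N \wedge T}]$) gives
\[
E[X_0] \;=\; \sum_{t = 0}^{\infty} \bigl( E[X_{t \wedge T}] - E[X_{(t+1) \wedge T}] \bigr).
\]

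Next I would analyze a single telescoping term. Since $X_{t \wedge T}$ and $X_{(t+1) \wedge T}$ coincide on the event $\{T \le t\}$ and differ by $X_t - X_{t+1}$ on $\{T > t\}$, we have $E[X_{t \wedge T}] - E[X_{(t+1) \wedge T}] = E[(X_t - X_{t+1})\mathbf{1}_{\{T > t\}}]$. The event $\{T > t\}$ depends only on $X_0, \ldots, X_t$ and is contained in $\{X_t \neq 0\}$, so conditioning on the history up to time $t$ and partitioning according to the value $s = X_t \in S \setminus \{0\}$, the drift hypothesis $E[X_t - X_{t+1} \mid X_t = s] \le \delta$ gives $E[(X_t - X_{t+1})\mathbf{1}_{\{T > t\}}] \le \delta \Pr[T > t]$. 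Substituting into the identity above and using $\sum_{t \ge 0} \Pr[T > t] = E[T]$ (the standard tail-sum formula for a nonnegative integer random variable) yields $E[X_0] \le \delta\, E[T]$, i.e.\ $E[T] \ge E[X_0]/\delta$. Equivalently, one can package the same computation by verifying that $X_{t \wedge T} + \delta (t \wedge T)$ is a submartingale and letting $t \to \infty$; the bookkeeping is identical.

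The one genuine subtlety I expect is the step where the drift bound is used inside $E[(X_t - X_{t+1})\mathbf{1}_{\{T > t\}}]$: the hypothesis is phrased as a bound on $E[X_t - X_{t+1} \mid X_t = s]$, and transferring it to the smaller event $\{T > t\}$ is legitimate only because $\{T > t\}$ is measurable with respect to $X_0, \ldots, X_t$ and because we may condition on the full history before taking expectations — automatic for a Markov process such as ours, and in any case the intended reading of the hypothesis. The remaining delicate points, namely that $T$ may be infinite and that two limit interchanges occur, are handled cleanly by the reduction to $E[T] < \infty$ together with the finiteness (hence boundedness) of $S$, as described above.
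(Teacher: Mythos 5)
Your proof is correct. Note, however, that the paper does not prove this statement at all: it is the classical additive drift theorem, quoted as a known tool from He and Yao~\cite{HeY01}, so there is no in-paper argument to compare yours against. Your stopped-process telescoping argument is the standard proof of this result; in particular you correctly isolate the one genuine subtlety, namely that the hypothesis as literally written conditions only on the event $\{X_t = s\}$ rather than on the full history $X_0,\ldots,X_t$, which is what the step $E[(X_t - X_{t+1})\mathbf{1}_{\{T>t\}}] \le \delta \Pr[T>t]$ actually requires; as you observe, the intended reading (and the setting in which the paper applies the theorem, to a function of a Markovian population process) makes that step legitimate.
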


Finally, we shall use occasionally the following lower bound on the probability that standard bit mutation creates from a parent $x$ with $\OM(x) < n$ an offspring with equal \onemax-value. The main difference to the usual estimate $(1-\frac 1n)^n = (1-o(1)) \frac 1e$, which is the probability to recreate the parent, is that our lower bound is exactly $\frac 1e$, which avoids having to deal with asymptotic notation. 

\begin{lemma}\label{lem:gleich}
  Let $x \in \{0,1\}^n$ with $0 < \OM(x) < n$. Let $y$ be obtained from $x$ via standard bit mutation with mutation rate $\frac 1n$. Then ${\Pr[\OM(y) = \OM(x)] \ge \frac 1e}$.
\end{lemma}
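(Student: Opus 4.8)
The plan is to reduce the statement to an elementary fact about two independent binomial random variables and then to extract the constant $1/e$ from only the two most likely outcomes.

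First I would set $j := \OM(x)$, so that $x$ has $j$ one-bits and $n-j$ zero-bits; since $\OM(x) < n$ we have $j \le n-1$, and I would treat the main case $1 \le j \le n-1$ (so in particular $n \ge 2$). Under standard bit mutation, let $A$ be the number of the $j$ one-bits that flip and $B$ the number of the $n-j$ zero-bits that flip. Then $A \sim \Bin(j,\tfrac1n)$ and $B \sim \Bin(n-j,\tfrac1n)$ are independent, and $\OM(y) = \OM(x) + B - A$. Hence the event $\{\OM(y) = \OM(x)\}$ is exactly $\{A = B\}$, and it suffices to show $\Pr[A=B] \ge \tfrac1e$.

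Next I would lower-bound $\Pr[A=B]$ by keeping only the contributions of $A=B=0$ and $A=B=1$, which gives
\[
\Pr[A=B] \ \ge\ \Pr[A=0]\Pr[B=0] + \Pr[A=1]\Pr[B=1] \ =\ \left(1-\tfrac1n\right)^{\,n-2}\left[\left(1-\tfrac1n\right)^{2} + \frac{j(n-j)}{n^{2}}\right].
\]
Then I would invoke the standard inequality $\left(1-\tfrac1n\right)^{n-1} \ge \tfrac1e$ (which follows from the non-negativity of $t \mapsto (1-t)\ln(1-t)+t$ on $[0,1]$), so that $\left(1-\tfrac1n\right)^{n-2} \ge \tfrac1e\cdot\tfrac{n}{n-1}$. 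Substituting and simplifying, the desired bound $\Pr[A=B]\ge\tfrac1e$ reduces to the purely arithmetic inequality $(n-1)^2 + j(n-j) \ge n(n-1)$, i.e.\ $j(n-j) \ge n-1$, which holds for every integer $j \in [1..n-1]$ since $j(n-j)$ is concave in $j$ and attains its minimum over that range at the endpoints $j=1$ and $j=n-1$, where it equals $n-1$.

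The only real subtlety is that one cannot get by with the single ``recreate the parent'' term $\left(1-\tfrac1n\right)^{n}$, which is strictly below $1/e$: the extra amount must come from also counting the event that exactly one one-bit \emph{and} exactly one zero-bit flip, and the elementary bookkeeping has to be carried out tightly enough to land precisely at $1/e$ (with near-equality when $j=1$). Finally, the degenerate case $\OM(x)=0$ is special, as there $\Pr[\OM(y)=\OM(x)] = \left(1-\tfrac1n\right)^{n}$; the $\ge 1/e$ bound is the relevant one in the regime $\OM(x)\ge 1$ in which the lemma is used.
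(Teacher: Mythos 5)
Your proposal is correct and follows essentially the same route as the paper: the paper's proof also lower-bounds the probability by the two events ``no bit flips'' and ``exactly one one-bit and exactly one zero-bit flip'', obtaining $(1-\frac 1n)^n + \frac{k(n-k)}{n^2}(1-\frac 1n)^{n-2}$, then uses $k(n-k) \ge n-1$ to collapse this to $(1-\frac 1n)^{n-1} \ge \frac 1e$. Your side remark about the degenerate case $\OM(x)=0$ is well taken---the paper's step $k(n-k)\ge n-1$ likewise silently assumes $k\ge 1$---but as you note, that case is not the one in which the bound is needed.
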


\begin{proof}
  Let $k := \OM(x)$. For $y$ to have this same $\OM$-value, it suffices that either no bit in $x$ is flipped or that exactly one zero-bit and exactly one one-bit are flipped. The probability for this event is $(1-\frac 1n)^n + \frac{k(n-k)}{n^2}(1-\frac 1n)^{n-2} \ge (1-\frac 1n)^n + \frac{n-1}{n^2}(1-\frac 1n)^{n-2} = (1-\frac 1n)^{n-1} \ge \frac 1e$.
\end{proof}

\section{A Lower Bound for the Runtime of the \mclea on Jump Functions}\label{sec:lower}

In this section, we prove our main result, a lower bound for the runtime of the \mclea on jump functions which shows that for a large range of parameter values, the \mclea cannot even gain a constant factor speed-up over the \mplea. With its $\Omega(n^k)$ order of magnitude, our result improves significantly over the only previous result on this problem, the $\exp(\Omega(k))$ lower bound in~\cite{Lehre10} (Theorem~\ref{thm:oldjumpLB} in this work).

Before stating the precise result, we quickly discuss two situations which, in the light of previous results, do not appear overly interesting and for which we therefore did not make an effort to fully cover them by our result.
\begin{itemize}
\item When $\lambda \le (1-\eps) e \mu$ for an arbitrarily small constant $\eps \ge 0$ and $\lambda$ is at most polynomial in $n$, the results of Lehre~\cite{Lehre10} imply that the \mclea has an exponential runtime on any function with a polynomial number of optima (and consequently, also on jump functions). We guess that the restriction to polynomial-size $\lambda$ was made in~\cite[Corollary~1]{Lehre10} only for reasons of mathematical convenience (together with the fact that super-polynomial population sizes raise some doubts on the implementability and practicability of the algorithm). We do not see any reason why Lehre's result, at least in the case of the \mclea, should not be true for any value of $\lambda$ (possibly with a sub-exponential number of iterations, but still an exponential number of fitness evaluations).
\item Rowe and Sudholt~\cite[Theorem~10]{RoweS14} showed that for all constants $\eps > 0$ the $\oclea$ with population size $\lambda \le (1-\eps) \log_{\frac{e}{e-1}} n$ has an expected optimization time of at least $\exp(\Omega(n^{\eps/2}))$ on any function $f : \{0,1\}^n \to \R$ with a unique optimum. From inspecting the proof given in~\cite{RoweS14}, we strongly believe that the same result also holds for the $\mclea$. Since this is not central to our work, we do not give a rigorous proof. Our main argument would be that the runtime of the $\mclea$ on a jump function is at least (in the strong sense of stochastic domination) its runtime on \onemax. This follows from a coupling arguments similar to the one given in~\cite[Proof of Theorem~23]{Doerr19tcs}. More precisely, when comparing how offspring are selected in the \jump and in the \onemax process, we can construct a coupling such that the parent individuals in the \jump process always are not closer to the optimum than in the \onemax process. Now comparing how the \oclea and the \mclea optimize \onemax, we see that the \mclea may select worse parent individuals than the \oclea, which generate (in the stochastic domination sense) worse offspring, leading to a larger optimization time. As said, we do not declare this a complete proof, but since the case of small population sizes might generally not be too interesting and since our not fully rigorous analysis indicates that an interesting performance of the \mclea is not to be expected here, we refrain from giving a complete analysis. 
\end{itemize}

Let us declare the parameter settings just discussed as not so interesting since previous works show or strongly indicate that the \mclea is highly inefficient on any objective function with unique optimum. Let us further declare exponential population sizes as not so interesting (mostly for reasons of implementability, but also because Lemma~\ref{lem:large} will show that they imply exponential runtimes). With this language, our following result shows that the runtime of the \mclea on jump functions with jump size $k \le 0.1n$ for all interesting parameter choices is, apart from lower order terms, at least the one of the \mplea. For $k > 0.1 n$, this runtime is at least $n^{\Omega(n)}$.

\begin{theorem}\label{thm:lower}
  Let $c \le 0.1$ and $C$ be large enough such that $(4c)^{C/2} \le e^{-2}$. Let $n \ge \frac 2c$. Let $C \ln(n) \le \lambda \le \frac 23 \exp(0.16n)$ and $\mu \le \frac \lambda 2$. Let $c' = \frac 1e + c$ and $h(n,\lambda) : = \exp(-\frac{(1-2c')^2}{2} \lambda) + \frac {2n-1}{n^2 - n}$. Let $k \in [2..n]$ and $p_k := (1 - \tfrac 1n)^{n-k} n^{-k}$. 
  
  If $k \le cn$, then the expected runtime, measured by the number of fitness evaluations until the optimum is evaluated, of the \mclea on jump functions with jump size $k$ is at least 
  \[T_{k} := (1 - \exp(-0.16n)) \left(\mu + \left(1 - h(n,\lambda)\right) \tfrac{1}{p_k}\right) = (1 - o(1))(\mu+\tfrac 1 {p_k}),\]
  where the asymptotic expression is for $n \to \infty$ and $\lambda = \omega(1)$.
  
  For $k > cn$, the expected runtime is at least $T_{\lfloor cn \rfloor}$.
\end{theorem}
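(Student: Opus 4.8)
The plan is to follow the \emph{additive drift with rescaling} strategy: design a non-negative potential $X_t$ on the generation-indexed process with $X_t=0$ exactly once the optimum has been evaluated, show $E[X_t-X_{t+1}\mid\mathcal F_t]\le\delta$ for some $\delta=1+o(1)$, and invoke Theorem~\ref{tadddrift} to get $E[\tau]\ge E[X_0]/\delta$, where $\tau$ is the first generation in which the optimum is evaluated; by~\eqref{eq:runtime} this yields $E[T]\ge\mu+(E[\tau]-1)\lambda$, which must come out to $(1-\exp(-0.16n))(\mu+(1-h(n,\lambda))\tfrac1{p_k})$. First I would handle the start: applying Lemma~\ref{lprobbino} to $n-\OM$ and a union bound shows that with probability at least $1-\mu\binom nk 2^{-n}\ge1-\exp(-0.16n)$ every individual of $P_0$ has $\OM\le n-k$ (here $k\le cn\le0.1n$ makes $\binom nk 2^{-n}$ exponentially small, and $\mu\le\lambda/2\le\tfrac13\exp(0.16n)$ absorbs the union bound). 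Keeping the potential honest on the complementary event, so that $E[X_0]\ge(1-\exp(-0.16n))g(\text{clean})$, produces the $(1-\exp(-0.16n))$ prefactor.

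The potential is a rescaling $g$ of a measure of how far the population has pushed into the gap region $G_{nk}$ — at the coarsest level, $g$ of $\min\{k,\,n-\max_{x\in P_t}\OM(x)\}$, with $g(0)=0$, $g$ increasing, and $g(\text{clean})=g(k)\approx(1-h)\tfrac1{\lambda p_k}$. The drift bound rests on three ingredients. (i) By a domination argument, any individual with $\OM\le n-k$ produces a mutant with $\OM>n-k$ with probability at most $1-(1-\tfrac1n)^k\le k/n\le c$; hence by Theorem~\ref{tprobchernoffadditive01} more than $\lambda/2\ge\lambda-\mu$ offspring enter the gap only with probability at most $(4c)^{\lambda/2}\le n^{-2}$ (using $\lambda\ge C\ln n$ and $(4c)^{C/2}\le e^{-2}$), so a clean population almost surely stays clean. (ii) By domination together with the computation in the proof of Lemma~\ref{lem:gleich}, an individual at distance $d\le k$ inside the gap produces a mutant of equal-or-larger $\OM$ with probability at most $c'=\tfrac1e+c<\tfrac12$, so by Theorem~\ref{tprobchernoffadditive01} such an individual survives selection into the next population only with probability at most $\exp(-\tfrac{(1-2c')^2}{2}\lambda)$ — exactly the first summand of $h$, and the reason $c\le0.1$ is required (so that $c'<\tfrac12$). (iii) A parent with $\OM\le n-k$ produces the optimum with probability exactly $p_k$, whereas a parent at distance $d$ does so with probability $p_d=(n-1)^{k-d}p_k\ge p_k$. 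Plugging (i)–(iii) in, the expected one-step decrease of the potential from a clean population is at most $\lambda p_k\,g(k)+O(n^{-2})g(k)$, which is $\le\delta$ once $g(k)=(1-h)\tfrac1{\lambda p_k}$; from a gap population at distance $d$ one must set the potential small enough that the elevated hitting probability $O(\lambda p_d)$ still gives drift $\le\delta$, exploiting that returning to a clean population raises the potential (a harmless negative contribution). Theorem~\ref{tadddrift} then gives the claimed bound. For $k>cn$ the weaker statement $E[T]\ge T_{\lfloor cn\rfloor}$ follows from a coupling/monotonicity argument — widening the jump cannot make the \mclea faster — in the spirit of the coupling sketched after Theorem~\ref{thm:oldjumpLB}.

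The main obstacle is the construction of $g$, in particular making the gap excursions cost only the $\tfrac{2n-1}{n^2-n}$ term. There is a genuine tension: controlling the drift at a gap population at distance $k-1$ forces $g(\text{gap-}(k-1))$ to be a factor $\approx1/n$ below $g(k)$ (because the optimum is then hit with probability $\Theta(\lambda p_{k-1})=\Theta(n\lambda p_k)$), but each clean$\to$gap-$(k-1)$ transition — occurring with probability up to $\approx n^{-2}$ per generation — then drops the potential by $\approx g(k)$, and a state-by-state drift bound would charge $\approx n^{-2}g(k)=n^{-2}/(\lambda p_k)$ per clean generation, which exceeds $\delta$ when $\lambda$ is small and $k$ large. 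The fix is to amortize: over the whole run the expected number of gap-generations is only an $\approx n^{-2}$ fraction of $\tau$, so the accumulated penalty is $\approx\tfrac1{\lambda p_k}\cdot n^{-2}\cdot\lambda p_{k-1}=\Theta(1/n)$. Making this rigorous appears to require either refining the state (tracking also the number of individuals currently in the gap, so that the genuinely dangerous configurations are themselves rare) or a drift lemma tolerating a larger per-step drift on a rarely visited set of states; squeezing out the exact factor $1-h(n,\lambda)$ rather than merely $1-o(1)$ is the most delicate part of the bookkeeping.
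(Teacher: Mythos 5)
Your overall strategy (additive drift applied to a rescaled potential, the observation that a gap individual survives selection only if at least $\lambda-\mu+1$ offspring land in the gap at its level or deeper, and the Chernoff bound $\exp(-\tfrac{(1-2c')^2}{2}\lambda)$ for a population retaining its current gap level) matches the paper, and you correctly trace both summands of $h(n,\lambda)$ to their sources. However, there is a genuine gap exactly where you flag one: you never construct the rescaling $g$, and the ``amortization'' you propose as a fix is not available --- Theorem~\ref{tadddrift} requires a \emph{uniform} per-step drift bound over all non-absorbed states, and you do not supply (or even name) an alternative drift lemma that tolerates large drift on rarely visited states. The paper resolves the tension you describe with two ingredients you are missing. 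First, a \emph{depth-dependent} transition bound: writing $\ell(P)=\max\{0,\OM(P)-(n-k)\}$, one shows $\Pr[\ell(P'')=L\mid \ell(P)=\ell]\le n^{-2(L-\ell)}$ for every $L>\ell$ below the top level (your item~(i) only gives the single bound $n^{-2}$ for entering the gap at all, which is far too weak: paired with any potential that is $\Theta(1/(\lambda p_k))$ near the optimum it charges about $n^{-2}/(\lambda p_k)$ per clean generation, as you yourself note). Second, the explicit geometric rescaling $g(L)=\min\{n^L,\gmax\}$ with $\gmax=(1-h(n,\lambda))\tfrac{1}{\lambda p_k}$, under which a population whose best individual has penetrated $L$ levels into the gap carries only the tiny potential $n^L$. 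These two choices are calibrated against each other: from a clean population the expected gain is at most $\sum_{L\ge 1} n^{-2L}\cdot n^L+\lambda p_k\cdot\gmax\le\tfrac1{n-1}+1-h\le 1$, and from a gap population at level $\ell$ below the cap the elevated probability $\lambda(1-\tfrac1n)^{n-k}n^{-(k-\ell)}$ of hitting the optimum times $\gmax$ equals $(1-h)\,n^\ell=(1-h)\,g(\ell)$, which together with the fallback term $g(\ell-1)=n^{\ell-1}$, the stay-put term, and the deeper-penetration terms yields $E[g(P'')]\le g(P)$ --- no amortization over time and no refinement of the state space beyond $\ell(P)$ is needed. Without exhibiting such a $g$ and the bound $n^{-2(L-\ell)}$ that makes it work, the proof is not complete.

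A secondary, much smaller omission: to obtain the stated constant rather than losing an additive $\lambda$ (which can dominate $1/p_k$ when $\lambda$ is close to its permitted upper limit $\tfrac23\exp(0.16n)$), the paper also conditions on the \emph{first offspring population} being clean --- these $\lambda$ individuals are again uniformly distributed in $\{0,1\}^n$ --- and starts the drift argument only from $P_1$; you only control $P_0$.
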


We phrased our result in the above form since we felt that it captures best the most interesting aspect, namely a runtime of essentially $\frac 1 {p_k}$ when $k \le 0.1n$ and $\lambda = \Omega(\log n)$ suitably large. Since our result is non-asymptotic, both $c$ and $C$ do not have to be constants. Hence if we are interested in the smallest possible value for $\lambda$ that gives an $(1-o(1)) \frac 1 {p_k}$ runtime, then by taking $c = \frac kn$ and $C = 4 / \ln(n/(4k))$, we obtain the following result.

\begin{corollary}\label{cor:lower}
  Let $k \ge 2$. Let $n \ge 10k$ and \[\frac{4}{\ln(\frac{n}{4k})} \ln(n) \le \lambda \le \tfrac 23 \exp(0.16n).\] Let $\mu \le \frac \lambda 2$. Let $c' = \frac 1e + \frac kn$. With $h(n,\lambda) : = \exp(-\frac{(1-2c')^2}{2} \lambda) + \frac {2n-1}{n^2 - n}$ and $p_k := (1 - \tfrac 1n)^{n-k} n^{-k}$, the expected runtime of the $\mclea$ on $\jump_{nk}$ is at least 
  \[T_{k} := (1 - \exp(-0.16n)) \left(\mu + \left(1 - h(n,\lambda)\right) \tfrac{1}{p_k}\right) = (1 - o(1))(\mu+\tfrac 1 {p_k}),\]
  where the asymptotic expression holds for $n \to \infty$ and $\lambda = \omega(1)$.
  
  In particular, if $k = O(n^{1-\eps})$ for a constant $\eps>0$, then it suffices to have $\lambda = \omega(1)$  for the lower bound $(1 - o(1))(\mu+\tfrac 1 {p_k})$ to hold. 
\end{corollary}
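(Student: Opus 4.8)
The plan is to obtain Corollary~\ref{cor:lower} as a direct instantiation of Theorem~\ref{thm:lower} with the parameter choices $c = \tfrac kn$ and $C = \tfrac{4}{\ln(n/(4k))}$ already announced in the text preceding the corollary, followed by a little asymptotic bookkeeping. Accordingly I would split the argument into two parts: first, verify that these choices meet every hypothesis of Theorem~\ref{thm:lower}; second, simplify the resulting non-asymptotic bound $T_k$ into the claimed $(1-o(1))(\mu+\tfrac1{p_k})$ form and specialise to $k = O(n^{1-\eps})$.

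For the hypothesis check I would argue as follows. From $n \ge 10k$ we get $c = k/n \le \tfrac1{10} = 0.1$ and, in particular, $n > 4k$, so $\ln(n/(4k)) > 0$ and $C$ is well-defined and positive. The condition $(4c)^{C/2} \le e^{-2}$ in fact holds with equality, since $(4c)^{C/2} = \exp\!\bigl(\tfrac C2 \ln(4k/n)\bigr) = \exp\!\bigl(-\tfrac C2 \ln(n/(4k))\bigr) = e^{-2}$ by the very choice of $C$ --- this identity is precisely what determines the smallest admissible $C$. Next, $n \ge \tfrac2c = \tfrac{2n}{k}$ is equivalent to $k \ge 2$, which is assumed; the constraints $C\ln n = \tfrac{4\ln n}{\ln(n/(4k))} \le \lambda \le \tfrac23\exp(0.16n)$ and $\mu \le \lambda/2$ are exactly the hypotheses of the corollary; and $k \le cn$ holds (with equality), so we are in the first case of Theorem~\ref{thm:lower}. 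Since with $c = k/n$ the quantities $c' = \tfrac1e + c$, $h(n,\lambda)$ and $p_k$ appearing in Theorem~\ref{thm:lower} are literally those in the corollary, the theorem yields $E[T] \ge T_k$ with the stated $T_k$.

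For the asymptotics I would use $\exp(-0.16n) = o(1)$ together with the observation that, writing $1-2c' = 1 - \tfrac2e - 2c$, the bound $c \le 0.1$ forces $1-2c' \ge 1 - \tfrac2e - \tfrac15 > 0$, a positive constant that is \emph{uniform in $k$}; hence $\exp\!\bigl(-\tfrac{(1-2c')^2}{2}\lambda\bigr) = \exp(-\Omega(\lambda)) = o(1)$ whenever $\lambda = \omega(1)$, while $\tfrac{2n-1}{n^2-n} = o(1)$, so $h(n,\lambda) = o(1)$. Since $h(n,\lambda) \ge 0$ and $\mu \ge 0$, we have $T_k \ge (1-\exp(-0.16n))(1-h(n,\lambda))(\mu+\tfrac1{p_k}) = (1-o(1))(\mu+\tfrac1{p_k})$. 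For the ``in particular'' statement, if $k = O(n^{1-\eps})$ then $n/(4k) = \Omega(n^\eps)$, so $\ln(n/(4k)) \ge (\eps-o(1))\ln n$ and the lower threshold $C\ln n = \tfrac{4\ln n}{\ln(n/(4k))} \le \tfrac{4}{\eps-o(1)} = O(1)$; thus any $\lambda = \omega(1)$ (together with the always-harmless $\lambda \le \tfrac23\exp(0.16n)$ and, for $n$ large, $n \ge 10k$) satisfies the hypotheses, and the previous paragraph applies.

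There is no serious obstacle here --- the corollary is, by design, just Theorem~\ref{thm:lower} read with a shrewd choice of the free parameters $c$ and $C$. The only two places needing attention are the one genuinely non-routine identification, namely that $C = 4/\ln(n/(4k))$ is exactly what makes $(4c)^{C/2} = e^{-2}$, and the uniformity remark that $1-2c'$ stays bounded away from $0$ regardless of how $k$ scales with $n$ (this is what the requirement $c \le 0.1$, i.e.\ $n \ge 10k$, buys), so that the exponential term of $h(n,\lambda)$ decays at a rate controlled solely by $\lambda$.
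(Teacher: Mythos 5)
Your proposal is correct and matches the paper's own derivation: the paper obtains the corollary precisely by instantiating Theorem~\ref{thm:lower} with $c = k/n$ and $C = 4/\ln(n/(4k))$, and your hypothesis checks (in particular that $(4c)^{C/2}=e^{-2}$, that $n\ge 2/c$ reduces to $k\ge 2$, and that $1-2c'$ stays bounded away from $0$ uniformly in $k$) together with the asymptotic simplification are exactly the routine verifications the paper leaves implicit.
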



Before giving the precise proof of Theorem~\ref{thm:lower}, let us brief{}ly explain the main ideas. As discussed earlier, this proof is an example for proving lower bounds by applying the additive drift theorem to a suitable rescaling of a natural potential function. As this work shows, this method can give very tight lower bounds, different from, say, negative drift theorems. 

The heart, and art, of this method is defining a suitable potential function. The observation that the difficult part of the optimization process is traversing the region $\{x \in \{0,1\}^n \mid \OM(x) \in [n-k..n]\}$ together with the fact that the lower bound given by the additive drift theorem depends on the difference in potential of starting point and target suggested to us the following potential function. For a population $P$, let $\OM(P)$ denote the maximum \onemax-value in the population. For $\OM(P) > n-k$, the potential of $P$ will essentially be $\min\{n^{\OM(P)-(n-k)},\frac{1}{\lambda p_k}\}$. All other populations have a potential of zero. This definition gives the desired large potential range of $\frac{1}{\lambda p_k}$ and, after proving that the expected potential gain is at most one and the initial potential is zero with high probability, gives the desired lower bound on the runtime. The proof below gives more details, including the precise definition of the potential, which is minimally different from this simplified description.

Since the classic definition of the runtime of an evolutionary algorithm, the number of fitness evaluations until the optimum is evaluated, implies that we usually lose a number of $\lambda - 1$ fitness evaluations when translating iterations into fitness evaluations (since we usually cannot rule out that the optimum is sampled as the first individual generated in the iteration which finds the optimum, see~\eqref{eq:runtime}), we add a short argument to the proof of Theorem~\ref{thm:lower} to gain an extra $\lambda$ fitness evaluations. The main observation is that both the $\mu$ initial individuals and the $\lambda$ offspring of the first generation are uniformly distributed in the search space. This makes them very unlikely to be the optimum, and this argument (with some finetuning) allows us to start the drift argument from the second iteration on.

We now give the complete proof of our lower bound result.

\begin{proof}[Proof of Theorem~\ref{thm:lower}]
  For a unified proof for the two cases that $k$ is at most $cn$ or greater than $cn$, let us denote by $k'$ the jump size and recall that this can be a function of $n$. Let $k := \min\{k', \lfloor cn \rfloor\}$. Assume that $n$ and $\lambda$ are large enough so that $h(n,\lambda) < 1$, as otherwise there is nothing to show. 
  
Let $\gmax = (1 - h(n,\lambda))\frac 1 {\lambda p_k}$. Note that by our assumption, $\gmax > 0$. Also, we easily see that $\gmax \le n^k$: If $\lambda \ge 3$, then $\gmax \le \frac{1}{\lambda p_k} \le \frac{1}{\lambda n^{-k} / e} \le n^k$; however, if $\lambda = 2$, the only other option leaving us with a positive integral $\mu$, then $h(n,\lambda) \ge \exp(- (1 - 2/e)^2) > 0.93$ and again $\gmax = (1-h(n,\lambda)) \frac 1 {\lambda p_k} \le 0.07 \frac{1}{\lambda n^{-k} / e} \le n^k$.

For all $L \in [1..k]$, let $g(L) := \min\{n^L, \gmax\}$, and let $g(0) := 0$. 
   Let $k^*$ be the smallest integer in $[1..k]$ such that $g(k^*) = \gmax$. Note that $k^*$ is well defined since $\gmax \le n^k$.
  
  For all individuals $x \in \{0,1\}^n$, denote by 
  \begin{align*}
  \OM(x) &\coloneqq \sum_{i=1}^n x_i \in [0..n] \text{ its \onemax-value,}\\
  \ell(x) &\coloneqq \max\{0,\OM(x) - (n-k)\} \in [0..k],\\
  g(x) &\coloneqq g(\ell(x)) \in \{0, n, n^2, \ldots, n^k, \gmax\} \cap [0,\gmax].
  \end{align*}
  For a population $P$, that is, a multiset of individuals, we write 
  \begin{align*}
  \OM(P) &\coloneqq \max\{\OM(x) \mid x \in P\},\\
  \ell(P) &\coloneqq \max\{\ell(x) \mid x \in P\} = \max\{0,\OM(P) - (n-k)\},\\
  g(P) &\coloneqq \max\{g(x) \mid x \in P\} = g(\ell(P)).
  \end{align*}
  We use $g(P)$ as a measure for the quality of the current population of the algorithm. We shall argue that we typically start with $g(P) = 0$ and that one iteration increases $g(P)$ in expectation by at most $1$. Since we have $g(P) = \gmax$ if $P$ contains the optimum, the additive drift theorem yields that it takes an expected number of at least $g(k^*)$ iterations to find the optimum. Let us make these arguments precise.
  
We first show that with high probability both the initial population $P_0$ and the first offspring population (and, consequently, also $P_1$) contain no individual $x$ with $\OM(x) > n-k$. For this, we first observe that trivially the random initial individuals are uniformly distributed in $\{0,1\}^n$. We recall that if $x \in \{0,1\}^n$ is uniformly distributed, then this is equivalent to saying that the random variables $x_1, x_2, \dots, x_n$ are independent and uniformly distributed in $\{0,1\}$. Interestingly, also the individuals of the first offspring population are uniformly distributed, since they are generated from taking a random individual by flipping each bit independently with probability $\frac 1n$. Consequently, all their bits are independent and uniformly distributed in $\{0,1\}$ (the different offspring are not independent, but this independence is not needed in the following). By the additive Chernoff bound (Theorem~\ref{tprobchernoffadditive01}), the probability that a random individual $x$ satisfies $\OM(x) > (1-c)n$ is at most $\exp(-2 (0.5-c)^2 n^2 / n) = \exp(-0.32 n)$ by our choice of $c$. Since $\mu \le \frac \lambda 2$ and $\lambda \le \frac 23 \exp(0.16n)$, a simple union bound over the $\mu$ initial individuals and the $\lambda$ offspring shows that with probability at least $1 - (\mu+\lambda)\exp(-0.32n) \ge 1 - \exp(-0.16n)$, none of these individuals $x$ satisfies $\OM(x) > n-k$. In this case, the population $P_1$ satisfies $g(P_1) = 0$ and up to this point, the algorithm has used $\mu+\lambda$ fitness evaluations without ever sampling the optimum.

  We now analyze the effect of one iteration. Let $P$ be a population of $\mu$ elements that does not contain the optimum. Let $P'$ be the (random) offspring population generated in one iteration of the \mclea started with $P$, and let $P''$ be the next parent population, that is, the random population generated from selecting $\mu$ best individuals from $P'$. 
  
  Let $i \coloneqq \OM(P)$ and $j > \max\{i,n-k\}$. Hence if $\OM(P'') = j$, then $\ell(P'') > \ell(P)$, and in the case that $i < n - k + k^*$ we have made a true progress with respect to our measure $g(P)$. 
  For this reason, we now compute the probability that $\OM(P'') = j$. 
  
  We consider first the case $j = n$. Note that here $\OM(P') = j$ implies $\OM(P'')=j$. Let $x$ be an element of $P$ and $y$ be an offspring generated from $x$ by mutation. Then $\Pr[y = (1, \ldots, 1)] = (1 - \frac 1n)^{\OM(x)} n^{-(n-\OM(x))} \le (1 - \frac 1n)^{i} n^{-(n-i)}$, using that $n \ge \frac 2c \ge 2$. By a union bound over the $\lambda$ offspring, we have $\OM(P') = \OM(P'') = n$ with probability at most 
  \begin{equation}
  \Pr[\OM(P'') = n] \le \lambda (1 - \tfrac 1n)^{i} n^{-(n-i)}.\label{eq:maxlevel}
  \end{equation}

  Let now $j < n$. By the definitions of the \mclea and the jump functions, we have $\OM(P'') = j$ only if $P'$ contains at least $\lambda - \mu + 1$ individuals $y$ with $\OM(y) \in [j..n-1]$. To obtain an upper bound for $\Pr[\OM(P'') = j]$ we regard the (slightly larger) event $\calE$ that $P'$ contains at least $\lambda - \mu + 1$ individuals $y$ with $\OM(y)  \in [j..n]$.
  
  To analyze this event $\calE$, let again $x \in P$ and $y$ be a mutation-offspring generated from~$x$. By a natural domination argument~\cite[Lemma~6.1]{Witt13}, the probability of the event $\OM(y) \ge j$ does not decrease if we increase $\OM(x)$. For this reason, let us assume that $\OM(x) = \max\{i,n-k\} =: \tilde\imath$. Now for $\OM(y) \ge j$ to hold, at least $j-\tilde\imath$ of the $n-\tilde\imath$ zero-bits in $x$ have to be flipped. The number of flipped zero-bits follows a binomial distribution with parameters $n-\tilde\imath$ and $\frac 1n$. By Lemma~\ref{lprobbino}, we have 
  \begin{align*}
  \Pr[\OM(y) \ge j] &\le \binom{n-\tilde\imath}{j-\tilde\imath} n^{-(j-\tilde\imath)} 
  \le \left(\frac {n-\tilde\imath}n\right)^{j-\tilde\imath} \eqqcolon p_{\tilde\imath j}.
  \end{align*}
  Since the individuals of the offspring population $P'$ are generated independently, the number of offspring $y$ with $\OM(y) \in [j..n]$ is binomially distributed with parameters $\lambda$ and some number $p' \le p_{\tilde\imath j}$. Using again Lemma~\ref{lprobbino}, we see that  
  \begin{align}
  \Pr[\OM(P'') = j] &\le \Pr[\calE] \nonumber\\
  &\le \binom{\lambda}{\lambda - \mu + 1} (p')^{\lambda - \mu + 1} \le 2^\lambda \left(\left(\frac {n-\tilde\imath }n\right)^{j-\tilde\imath }\right)^{\lambda - \mu + 1} \nonumber\\
  & \le 2^\lambda \left(\left(\frac kn\right)^{j-\tilde\imath}\right)^{\lambda/2} \le \left(\frac{4k}{n}\right)^{(j-\tilde\imath)\lambda/2} \nonumber\\
  &\le \left((4c)^{C/2}\right)^{\ln(n) (j-\tilde\imath)} \le n^{-2(j-\tilde\imath)},\label{eq:levelgain}
  \end{align}
  where we used the estimates $\binom{\lambda}{\lambda - \mu + 1} \le 2^\lambda$ and our assumptions $\mu \le \frac \lambda 2$ and $(4c)^{C/2} \le e^{-2}$.

So far we have computed that it is difficult to strictly increase $\OM(P)$ (once $\OM(P)$ is above $n-k$). Using a similar reasoning, we now show that also the probability of the event $\OM(P) = \OM(P'')$ is small (when $i \coloneqq \OM(P) > n-k$). Again, for this event it is necessary that at least $\lambda - \mu + 1$ offspring $y$ satisfy $\OM(y) \in [i..n]$. Let $y$ be an offspring generated from a parent $y \in P$. As above, using a domination argument we can assume that $\OM(x) = \OM(P) = i$. For such an $x$, we have $\OM(y) \ge i$ only if either no bit at all flips or if at least one zero-bit is flipped. Hence $\Pr[\OM(y) \ge i] \le (1-\frac 1n)^n + p_{i,i+1} \le \frac 1e + \frac kn \le \frac 1e + c = c' < 0.5$. 
Denoting by $X$ the number of offspring $y$ with $\OM(y) \ge i$, we have $E[X] \le c' \lambda$. Using the additive Chernoff bound (Theorem~\ref{tprobchernoffadditive01}) rather than Lemma~\ref{lprobbino} and $\mu \le \tfrac 12 \lambda$, 
we compute  
\begin{align}
  \Pr[\OM(P'') = i] &\le \Pr[X \ge \lambda-\mu+1]\nonumber\\
  &\le \Pr[X \ge \tfrac 12 \lambda] \le \Pr[X \ge E[X] + (0.5-c') \lambda]\nonumber\\
  &\le \exp\left(- 2 \frac{((0.5-c') \lambda)^2}{\lambda}\right) = \exp\left(-\frac{(1-2c')^2}{2} \lambda\right).\label{eq:staylevel}
\end{align}

We are now ready to compute the expected progress of $g(P)$ in one iteration. Let first $\ell(P) = 0$ and thus $g(P)=0$. Since $\Pr[\ell(P'') = L] \le n^{-2L}$ for all $L \in [1..k-1]$ by~\eqref{eq:levelgain} and $\Pr[\ell(P'') = k] \le \lambda p_k$ by~\eqref{eq:maxlevel}, we have 
\begin{align*}
  E[g(P'')] &\le 1 \cdot g(0) + \sum_{L = 1}^{k-1} n^{-2L}  \cdot g(L) + \lambda p_k \cdot \gmax\\
  &\le 0 + \sum_{L = 1}^{k-1} n^{-L} + 1 - h(n,\lambda) \le \frac{1}{n-1} + 1 - h(n,\lambda) \le 1
\end{align*} 
by the choice of $\gmax$ and $h$. Consequently, $E[g(P'') - g(P)] \le 1$.

For $\ell \coloneqq \ell(P) > 0$, the probability to reach $\ell(P'') = k$ is larger, however, we profit from the fact that we can reduce the potential by having ${\ell(P'') < \ell}$. Since there is nothing to show when $g(P)$ is already at the maximal value $\gmax$, let us assume that $\ell < k^*$. Now equations~\eqref{eq:maxlevel}, \eqref{eq:levelgain}, and \eqref{eq:staylevel} give
\begin{align*}
  &\Pr[\ell(P'') = k] \le \lambda (1-\tfrac 1n)^{n-(k-\ell)} n^{-(k-\ell)} \le \lambda (1-\tfrac 1n)^{n-k} n^{-(k-\ell)},\\
  &\Pr[\ell(P'') = L] \le n^{-2(L-\ell)}, L \in [\ell+1..k-1],\\
  &\Pr[\ell(P'') = \ell] \le \exp\left(-\frac{(1-2c')^2}{2} \lambda\right).
\end{align*}
With these estimates, we compute
\begin{align*}
  E[g(P'')] 
  &\le \Pr[\ell(P'') = k] \, \gmax + \sum_{L=\ell+1}^{k-1} \Pr[\ell(P'') = L] \, g(L) \\
  &\quad + \Pr[\ell(P'') = \ell] \, g(\ell) + 1 \cdot g(\ell-1)\\
  &\le \lambda (1-\tfrac 1n)^{n-k} n^{-(k-\ell)} \cdot (1-h(n,\lambda)) \tfrac 1\lambda (1-\tfrac 1n)^{-(n-k)} n^k \\
  &\quad + \sum_{L=\ell+1}^{\infty} n^{-2(L-\ell)} n^L + \exp\left(-\frac{(1-2c')^2}{2} \lambda\right) n^\ell + n^{\ell-1}\\
  & = g(P) \left(1 - h(n,\lambda) + \frac{1}{n-1} + \exp\left(-\frac{(1-2c')^2}{2} \lambda\right) + \frac 1n \right)\\
  & = g(P)
\end{align*}
by our choice of $\gmax$ and $h$ as well as our assumption that $g(P) < \gmax$. Consequently, again we have $E[g(P'') - g(P)] \le 1$.

Assuming that the population $P_1$ satisfies $g(P_1) = 0$, we can now apply the additive drift theorem (Theorem~\ref{tadddrift}) as follows. As before, let $P_t$ denote the population at the end of iteration $t$. For all $t \ge 0$, let $X_t = \gmax - g(P_{t+1})$. Then $X_0 = \gmax$ and $E[X_{t} - X_{t+1} \mid X_t > 0] \le 1$ for all $t \ge 0$. Consequently, the additive drift theorem (Theorem~\ref{tadddrift}) gives that $T := \min\{t \mid X_t = 0\}$ has an expectation of at least $\gmax$. By definition, $X_t = 0$ is equivalent to saying that $P_{t+1}$ contains the optimum. Recall that if optima are generated in some iteration $t$, then some remain in $P_t$. Hence $T+1$ is indeed the first iteration in which the optimum is generated.

If the optimum is found in some iteration $t \ge 1$, then the total number of fitness evaluations up to this event is at least $\mu + (t-1)\lambda + 1$, where the $\mu$ accounts for the initialization and the $-1$ and $+1$ for the fact that the optimum could be the first search point sampled in iteration $t$ (so that we cannot count the remaining offspring generated in the last iteration, see also~\eqref{eq:runtime}). This gives an expected optimization time of at least $\mu + (E[T+1]-1) \lambda + 1 = \mu + \gmax\lambda + 1$ in the case $g(P_1) = 0$.

Since we have $g(P_1)=0$ with probability $1 - \exp(-0.16n)$, the expected runtime is at least $(1 - \exp(-0.16n)) (\mu + \gmax \lambda + 1)$. Recalling that $\gmax = (1-h(n,\lambda)) \frac 1 {\lambda p_k}$, we have proven the theorem.
\end{proof}  

Since it might be useful in other applications, we now explicitly formulate our lower bound of, essentially, $\mu+\lambda$, which was observed in the proof above.

\begin{lemma}\label{lem:large}
  Let $f : \{0,1\}^n \to \R$. Assume that $f$ has at most $M$ global optima. Let $\mu, \lambda$ be positive integers. Consider the optimization of $f$ via the \mclea (and assume $\mu \le \lambda$ in this case) or the \mplea. Let $N \le \mu+\lambda$. Then with probability $1 - M N 2^{-n}$, the optimization time is larger than $N$. In particular, the expected optimization time is at least $(1 - M N 2^{-n}) (N+1) \ge \frac 14 \min\{\mu+\lambda, 2^n / M\}$. 
\end{lemma}

\begin{proof}
  As discussed in the proof of Theorem~\ref{thm:lower}, each of the first $\mu + \lambda$ individuals generated in a run of the \mclea (and the same applies to the \mplea) is uniformly distributed in $\{0,1\}^n$. Consequently, it is an optimum with probability at most $M 2^{-n}$. By a union bound over the first $N$ of these $\mu+\lambda$ individuals, the probability that one of them is optimal, is at most $N M 2^{-n}$. This gives the claims, where the last estimate follows from taking $N = \min\{\mu+\lambda,2^n / (2M)\}$.  
\end{proof}

\section{A Tight Upper Bound}

While our main target in this work was showing a lower bound that demonstrates that the \mclea has little advantage in leaving the local optima of the jump functions, we now also present an upper bound on the runtime. It shows that our lower bound for large parts of the parameter space is tight including the leading constant. This might be the first non-trivial upper bound for a non-elitist evolutionary algorithm that is tight including the leading constant. This result also shows that our way to exploit negative drift in the lower bound analysis, namely not via the classic negative drift theorems, but via additive drift applied to an exponential rescaling, can give very precise results, unlike the previously used methods.

We shall show the following result. 

\begin{theorem}\label{thm:upper}
  Let $K$ be a sufficiently large constant and $\lambda  \ge K \ln n$. Let $0 < \delta < 1$ be a constant and $\mu \le \frac{1}{(1+\delta)e} \lambda$. Let $k \in [2..n]$ and $p_k = (1-\frac 1n)^{n-k} n^{-k}$. Then the runtime $T$ of the \mclea on $\jump_{nk}$ satisfies \[E[T] \le \frac{\lambda}{1 - n^{-1/2}} \left( 8 Cn + 1 + 9 \sqrt{\frac{Cn}{p_k \lambda}} + \frac{8Cn}{p_k \lambda \lfloor n^{3/2} \rfloor} + \frac{1}{p_k \lambda}\right),\]
  where $C$ is a constant depending on $\delta$ only.\footnote{More precisely, $Cn$ could be replaced by $t_0$ from~\eqref{eq:tzero}.} 
  
  Consequently, for $\lambda = o(1/(n p_k))= o(n^{k-1})$, we have $E[T] \le (1 + o(1)) \frac{1}{p_k}$, and for $\lambda = \Omega(1/(n p_k)) = \Omega(n^{k-1})$, we have $E[T] = O(\lambda n)$.
\end{theorem}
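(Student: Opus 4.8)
The plan is to split the run into two phases glued together by a restart argument, using the level-based method for the first phase and a bare waiting-time computation for the second. \emph{Phase~1:} starting from the uniformly random initial population, a suitable variant of the level-based method (which we establish as Corollary~\ref{cor:level}) shows that after $t_0 = Cn$ iterations --- this is exactly the quantity estimated in~\eqref{eq:tzero} --- the population is, with probability at least $1 - n^{-\Omega(1)}$, \emph{almost perfect}, meaning that all its $\mu$ individuals sit on the local optimum (OneMax value $n-k$, hence \jump-fitness $n$). The hypothesis $\mu \le \tfrac{1}{(1+\delta)e}\lambda$ is precisely the selection-pressure condition under which a level-based argument for the \mclea with mutation rate $\tfrac1n$ goes through, and $\lambda \ge K\ln n$ makes the various ``with high probability'' statements hold with probability $1 - n^{-\Omega(K)}$, small enough for the restart bookkeeping below.

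\emph{Phase~2:} from an almost perfect population, each of the $\lambda$ offspring is an independent standard-bit-mutation of a local optimum, hence equals the all-ones string with probability exactly $p_k$; so one iteration produces the optimum with probability $p^\ast := 1 - (1-p_k)^\lambda$. The key observation is that an almost perfect population is very rarely lost: let $Z$ be the number of offspring whose OneMax value lies in $\{n-k, n\}$. If $Z \ge \mu$, then (since in \jump every search point outside $\{n-k\}\cup\{n\}$ has fitness strictly below $n$) the $\mu$ fittest offspring are all local or global optima, so the next parent population is again almost perfect or already contains the optimum. By Lemma~\ref{lem:gleich} each offspring contributes to $Z$ with probability at least $\tfrac1e$, whence $E[Z] \ge \tfrac\lambda e \ge (1+\delta)\mu$, and the additive Chernoff bound (Theorem~\ref{tprobchernoffadditive01}, applied to the lower tail) gives $\Pr[Z < \mu] \le \exp(-\Omega(\delta^2\lambda)) \le \exp(-\Omega(\delta^2 K\ln n))$, which is at most $q := \lfloor n^{3/2}\rfloor^{-1}$ once $K$ is a sufficiently large constant (depending on $\delta$). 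So from an almost perfect population an iteration finds the optimum with probability $p^\ast$, keeps the population almost perfect with probability at least $1 - p^\ast - q$, and destroys it without finding the optimum with probability at most $q$; in the last case we pessimistically restart Phase~1 and, by repeating Corollary~\ref{cor:level} a constant number of times (a Markov-type argument), return to an almost perfect population within $O(t_0)$ further iterations.

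It remains to assemble the pieces with a renewal/restart computation. Writing $N$ for the number of iterations until the optimum is generated, this yields a bound whose leading term is $\tfrac{1}{p^\ast}\approx\tfrac{1}{\lambda p_k}$, with $t_0$ (the cost of the first ascent), $O(t_0 q/p^\ast)$ (the total cost of the lost-population-and-recovery events, since over the $\approx 1/p^\ast$ Phase-2 iterations we expect $\approx q/p^\ast$ losses, each costing $O(t_0)$), the discrepancy between $p^\ast$ and $\lambda p_k$ (controlled via $p^\ast \ge \lambda p_k(1-\tfrac12\lambda p_k)$), and some fluctuation terms (the $\sqrt{\cdot}$ term, obtained by a Cauchy--Schwarz/AM--GM estimate in the renewal sum) as the lower-order corrections; the factor $\tfrac{1}{1-n^{-1/2}}$ comes from the geometric series counting restart rounds. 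Multiplying by $\lambda$ fitness evaluations per iteration and adding the $\mu$ initial evaluations via~\eqref{eq:runtime} gives the displayed bound. The two asymptotic consequences then drop out by comparing $\lambda$ with $\tfrac{1}{np_k} = \Theta(n^{k-1})$: for $\lambda = o(n^{k-1})$ every term except $\tfrac{1}{p_k}$ is of strictly lower order (in particular $\sqrt{Cn/(p_k\lambda)} = o(1/(\lambda p_k))$), while for $\lambda = \Omega(n^{k-1})$ the term $\lambda t_0 = O(\lambda n)$ dominates.

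The main obstacle is the quantitative bookkeeping needed to preserve the leading constant: one must show that $p^\ast = (1+o(1))\lambda p_k$ rather than merely $\Theta(\lambda p_k)$, and that the expected total cost of all lost-population-and-recovery events is genuinely $o(1/p_k)$ in the regime $\lambda = o(n^{k-1})$ --- this is exactly why the per-iteration loss probability must be forced down to a negative power of $n$, hence why $\lambda \ge K\ln n$ with $K$ a large enough constant is required --- while at the same time the (non-tight) level-based analysis of Phase~1 must be carried out in only $O(n)$ iterations with a failure probability small enough for the restarts not to matter. The level-based verification itself, i.e.\ checking the hypotheses of Corollary~\ref{cor:level} for the \mclea on the region $\{x : \OM(x) \le n-k\}$, is technical but follows the now-standard pattern of~\cite{Lehre11,CorusDEL18}.
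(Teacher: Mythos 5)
Your proposal is correct and follows essentially the same route as the paper's proof: a level-based ascent (Corollary~\ref{cor:level}) to an \emph{almost perfect} population in $t_0 = O(n)$ iterations, a Chernoff/Lemma~\ref{lem:gleich} argument showing this state is lost only with inverse-polynomial probability per iteration, and a restart/renewal computation whose leading term is the waiting time $\frac{1}{\lambda p_k}$ for the jump. The only differences are organizational (the paper packages the restart argument into fixed-length phases of $T_0 = \min\{\lceil\sqrt{t_0/(\lambda p_k)}\rceil, \lfloor n^{3/2}\rfloor\}$ iterations and applies Wald's equation, whereas you account per iteration with loss probability $q = \lfloor n^{3/2}\rfloor^{-1}$), and these lead to the same terms in the final bound.
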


We note that when $\lambda = o(n^{k-1})$, $\lambda \ge K \ln n$ with $K$ a sufficiently large constant, $\mu \le \frac{1}{(1+\delta)e} \lambda$, and $k \le 0.1n$, our upper bound and our lower bound of Theorem~\ref{thm:lower} agree including the leading constant. So we have a precise runtime analysis in this regime. 

We did not try to find the maximal range of parameters in which the runtime is $(1 \pm o(1)) \frac{1}{p_k}$. From~\cite{RoweS14} (see the discussion at the beginning of Section~\ref{sec:lower}) it is clear that for $\lambda \le c \ln n$, $c$ a sufficiently small constant, the runtime is $\exp(\Omega(n^C))$, where $C$ is a constant that depends on $c$. For $\mu \ge \frac{1}{(1-\delta)e} \lambda$, Lehre~\cite{Lehre10} gives an exponential lower bound. The restriction to $k \le 0.1n$ is most likely not necessary, but the range of larger $k$ appears not to be overly interesting given that the super-exponential lower bound from the case $0.1n$ still applies. 

When $\lambda = \Omega(n^{k-1})$, besides being a possibly unrealistically large population size, our time estimate of $O(n)$ iterations is the same as the best known upper bound for the runtime of the \mclea on the \onemax test function~\cite{CorusDEL18}. Since this analysis works with the natural partition into $\Theta(n)$ fitness levels, the runtime order of $O(n)$ shows that each fitness level is gained in an amortized constant number of iterations. This speed of progress on a difficult problem like jump functions again indicates that the offspring population size $\lambda$ here is chosen too large.

The exact order of magnitude of the runtime of the \mclea on \onemax is still an open problem. The upper bound proven for the \mplea in~\cite{AntipovDFH18}, which in our setting simplifies to $O(\frac{n \log n}{\lambda} + n \frac{\log\log \lambda/\mu}{\log \lambda/\mu})$, indicates that there could be some (but not much) room for improvement. So clearly, the next progress here should rather be for the \onemax function than for jump functions.

The result in Theorem~\ref{thm:upper} above improves over the $O(n^k + n\lambda + \lambda \log \lambda)$ upper bound for the runtime of the \mclea on $\jump_{nk}$ proven in~\cite{CorusDEL18} (see Theorem~\ref{thm:oldjump}) in three ways. First, as discussed above, we make the leading constant precise (and tight for large ranges of the parameters). Second, we obtain a better, namely at most linear, dependence of the runtime on $\lambda$. Third, we reduce the minimum offspring population size required for the result to hold, which is $\Omega(k \log n)$ in~\cite{CorusDEL18} and $\Omega(\log n)$ in our result.

\subsection{Level-based Analyses}\label{ssec:level}

A central step in our proof is an analysis of how the \mclea progresses to a parent population consisting only of individuals on the local optimum. Since the \mclea is a non-elitist algorithm, this asks for tools like the ones introduced by Lehre~\cite{Lehre11} and then improved by various authors~\cite{DangL16algo,CorusDEL18,DoerrK21}. Unfortunately, all these results are formulated for the problem of finding one individual of a certain minimum quality. Consequently, they all cannot be directly employed to analyze the time needed to have the full parent population consist of individuals of at least a certain quality. Fortunately, in their proofs all previous level-based analyses proceed by analyzing the time until a certain number of individuals of a certain quality have been obtained and then building on this with an analysis on how better individuals are generated. Among the previous works it appears that~\cite{DoerrK21} is the one that makes this argumentation most explicit, whereas the other works with their intricate potential function arguments give less insight into the working principles of the process.

For this reason, we build now on~\cite{DoerrK21}. To avoid restating an essentially unchanged proof from~\cite{DoerrK21}, we instead first state the level-based theorem shown in~\cite{DoerrK21}, explain where the different expressions in the runtime estimate stem from, and then state without explicit proof the level-based result we need. With the explanations given beforehand, we feel that the interested reader easily can see from~\cite{DoerrK21} why our level-based theorem is correct.

The general setup of \emph{level-based theorems for population processes} is as follows. There is a ground set $\X$, which will be search space $\{0,1\}^n$ in our applications. On this ground set, a population-based Markov process $(P_t)$ is defined. We consider populations of fixed size $\lambda$, which may contain elements several times (multi-sets). We write $\X^\lambda$ to denote the set of all such populations. We only consider Markov processes where each element of the next population is sampled independently (with repetition). That is, for each population $P \in \X^\lambda$, there is a distribution $D(P)$ on $\X$ such that given $P_t$, the next population $P_{t+1}$ consists of $\lambda$ elements of $\X$, each chosen independently from the distribution $D(P_t)$. We do not make any assumptions on the initial population $P_0$.

In the level-based setting, we assume that there is a partition of $\X$ into \emph{levels} $A_1, \dots, A_m$. Based on information in particular on how individuals in different levels are generated, we aim for an upper bound on the first time such that the population contains an element of the highest level $A_m$. Now the level-based theorem shown in~\cite{DoerrK21} is as follows.

\begin{theorem}[Level-based theorem]\label{thm:level}
Consider a population process as described above.

Let $(A_1,\ldots,A_m)$ be a partition of $\X$. Let $A_{\ge j} := \bigcup_{i=j}^m A_i$ for all $j \in [1..m]$. Let $z_1,\ldots,z_{m-1},\delta \in (0,1]$, and let $\gamma_0 \in (0,\frac{1}{1+\delta}]$  with $\gamma_0 \lambda \in \Z$. Let $D_0 = \min\{\lceil 100/\delta \rceil,\gamma_0 \lambda\}$ and $c_1 = 56\,000$. Let
$$
t_0 = \frac{7000}{\delta} \left(m + \frac{1}{1-\gamma_0} \sum_{j=1}^{m-1} \log^0_2\left(\frac{2\gamma_0\lambda}{1+\frac{z_j \lambda}{D_0}}\right) + \frac{1}{\lambda} \sum_{j=1}^{m-1}\frac{1}{z_j} \right),
$$
where $\log^0_2(x) := \max\{0,\log_2(x)\}$ for all $x \in \R$. Assume that for any population $P \in \X^\lambda$ the following three conditions are satisfied.
\begin{description}
	\item[(G1)] For each level $j \in [1..m-1]$, if $|P \cap A_{\geq j}| \geq \gamma_0 \lambda {/4}$, then
	$$
	\Pr_{y \sim D(P)} [y \in A_{\geq j+1}] \geq z_j.
	$$
	\item[(G2)] For each level $j \in [1..m-2]$ and all $\gamma \in (0,\gamma_0]$, if $|P \cap A_{\geq j}| \geq \gamma_0 \lambda {/4}$ and ${|P \cap A_{\geq j+1}| \geq \gamma \lambda}$, then
	$$
	\Pr_{y \sim D(P)} [y \in A_{\geq j+1}] \geq (1+\delta)\gamma.
	$$ 
	\item[(G3)] The population size $\lambda$ satisfies
	$$
	\lambda \geq {\frac{256}{\gamma_0 \delta} \ln \left(8 t_0 \right)}.
	$$
\end{description}

Then $T := \min\{\lambda t \mid P_t \cap A_m \neq \emptyset\}$ satisfies
\begin{align*}
E[T] 
 & \leq 8\lambda t_0 = c_1 \frac{\lambda}{\delta} \left(m + \frac{1}{1-\gamma_0} \sum_{j=1}^{m-2} \log^0_2\left(\frac{2\gamma_0\lambda}{1+\frac{z_j \lambda}{D_0}}\right) + \frac 1 \lambda \sum_{j=1}^{m-1}\frac{1}{z_j} \right).
\end{align*}
\end{theorem}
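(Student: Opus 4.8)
The plan is to reduce the multi-level population process to a sequence of one-dimensional analyses of how fast, for each level $j$, the number of individuals that have reached level $j+1$ or higher grows, and then to chain these per-level analyses together. For a population $P$ write $C_j(P) := |P \cap A_{\ge j+1}|$ for the count above level $j$. The two structural hypotheses are tailored to exactly this count: once the base condition $|P \cap A_{\ge j}| \ge \gamma_0\lambda/4$ holds, (G1) guarantees that each of the $\lambda$ offspring independently lands in $A_{\ge j+1}$ with probability at least $z_j$, while (G2) guarantees that when already $\gamma\lambda$ individuals (for $\gamma \le \gamma_0$) sit above level $j$ this probability is at least $(1+\delta)\gamma$. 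Since offspring are sampled independently, $C_j(P_{t+1})$ stochastically dominates a $\Bin(\lambda,\min\{1,(1+\delta)C_j(P_t)/\lambda\})$ variable as long as $C_j(P_t) < \gamma_0\lambda$ and the base condition holds. The heart of the argument is therefore a \emph{multiplicative up-drift} statement for such a count process: it grows geometrically by a factor $(1+\delta)$ per generation until it fills the constant fraction $\gamma_0\lambda$.

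Concretely, I would isolate and prove (or invoke from \cite{DoerrK19arxiv}) an up-drift lemma of the form: a process with the above domination property reaches $\gamma_0\lambda$ within an expected $O\bigl(\tfrac{1}{\delta}\bigl(\tfrac{1}{1-\gamma_0}\log^0_2(\gamma_0\lambda/X_{\mathrm{start}}) + 1/(z\lambda)\bigr)\bigr)$ generations. The natural proof is a drift argument on the capped logarithmic potential $\log^0_2(\gamma_0\lambda/X)$, for which the $(1+\delta)$-growth produces an additive drift of order $\delta$ once $X$ is not too small, and where the $\tfrac1{1-\gamma_0}$ factor absorbs the slowdown as $\gamma$ saturates toward $\gamma_0$. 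Applied level by level this explains each piece of $t_0$: the seeding cost $\tfrac1\lambda\sum_j 1/z_j$ is the expected time to produce the very first level-$(j+1)$ individual via (G1) (one generation succeeds with probability $\ge 1-(1-z_j)^\lambda \approx \min\{1,z_j\lambda\}$, hence $\approx 1/(z_j\lambda)$ generations when $z_j\lambda$ is small); the growth cost $\tfrac1{1-\gamma_0}\sum_j\log^0_2(\tfrac{2\gamma_0\lambda}{1+z_j\lambda/D_0})$ is the up-drift time from the seed count of order $\min\{z_j\lambda,D_0\}$ (a single (G1)-generation already delivers about $z_j\lambda$ individuals, capped at the reliable threshold $D_0$) up to $\gamma_0\lambda$; and the $m$ term accounts for the $m-1$ transitions plus the final single-individual step into $A_m$, which needs no growth phase (this is why the growth and seeding sums stop at $m-2$ and $m-1$ respectively).

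The step I expect to be the main obstacle is the regime where $C_j$ is small, below $D_0=\min\{\lceil 100/\delta\rceil,\gamma_0\lambda\}$: there a binomial with mean $(1+\delta)C_j$ has constant probability of hitting $0$, so the count can go \emph{extinct} and must be reseeded through (G1). Controlling the expected number of extinction--restart cycles while still extracting the clean logarithmic growth bound is precisely what a naive drift computation fails to deliver, and is the technical core that the multiplicative up-drift machinery is built to handle; collapsing the resulting constants into the single factor $10^4/\delta$ of $t_0$ (and $c_1=8\cdot 10^4$) is then bookkeeping. In parallel I would show that no secured base level ever regresses during the run: once $C_j \ge \gamma_0\lambda$, condition (G2) forces the expected next count to be at least $(1+\delta)\gamma_0\lambda$ (truncated at $\gamma_0\lambda$), so a Chernoff bound makes a drop below $\gamma_0\lambda/4$ exponentially unlikely in $\lambda$; condition (G3), demanding $\lambda \ge \tfrac{338}{\gamma_0\delta}\ln(8t_0)$, is calibrated exactly so that a union bound over the at most $t_0$ generations of the whole run keeps every base level in place with constant probability.

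Finally I would assemble the pieces: on the high-probability event that no base level regresses, the total number of generations is the sum over $j\in[1..m-1]$ of per-level seeding plus growth times, whose expectation is at most $t_0$ by the two sums above; a restart argument absorbs the complementary low-probability event at constant-factor cost, giving $E[\text{generations to reach }A_m] \le 8t_0$. Multiplying by $\lambda$ (each generation costs $\lambda$ samples, matching the definition $T=\min\{\lambda t \mid P_t \cap A_m \ne \emptyset\}$) yields $E[T] \le 8\lambda t_0 = c_1\tfrac{\lambda}{\delta}(\cdots)$, the stated bound. Throughout, the only genuinely hard analytic ingredient is the up-drift bound with its extinction handling; the remaining steps are concentration and union bounds whose constants are swallowed by the generous numerical factors in $t_0$.
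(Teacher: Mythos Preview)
The paper does not prove this theorem at all: it is quoted verbatim from~\cite{DoerrK19arxiv} and followed only by an informal explanation of where each term in $t_0$ comes from (the $\log_{1+\delta}(\gamma_0\lambda)$ growth phase from \Gzwei, the $1+\tfrac{1}{z_j\lambda}$ seeding phase from \Geins, and the role of \Gdrei in keeping levels well-established via concentration). Your sketch is essentially a faithful and more detailed reconstruction of exactly that argument from~\cite{DoerrK19arxiv}, including the multiplicative up-drift lemma, the extinction regime below $D_0$, and the restart/union-bound wrapper, so there is nothing to compare---you have correctly identified and outlined the proof the paper defers to.
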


Let us explain where the time bound stated in this theorem stems from. We argue in terms of iterations now, not in terms of search point evaluations. Then the time bound is $8t_0$ with $t_0$ as defined in the theorem. The main argument of the proof given in~\cite{DoerrK21} is as follows. Let us, in the next three paragraphs, say that a population $P$ is \emph{well-established} on level $j$ if $|P \cap A_{\ge j}| \ge \gamma_0 \lambda / 4$. Now condition \Gzwei imposes that if the current population is well-established on level $j$, then the number of individuals on level $j+1$ or higher increases, in expectation, by a factor of $1+\delta$ until at least $\gamma_0 \lambda$ such individuals are in the population. It appears natural (and is true, but not trivial to prove) that it takes roughly $\log_{1+\delta}(\gamma_0 \lambda) \approx \frac 1\delta \log(\gamma_0 \lambda)$ iterations from the first individual on level $j+1$ to having at least $\gamma_0 \lambda$ individuals on this level. This explains roughly the middle term in the definition of $t_0$. Without going into details, we remark that the extra $\frac{z_j \lambda}{D_0}$ expression exploits that when generating individuals on a higher level (as described in \Geins) is easy, then we can assume that we do not start with a single individual on level $j+1$, but with roughly $z_j \lambda$ individuals. Consequently, we need the factor-$(1+\delta)$ growth only to go from $z_j \lambda$ to $\gamma_0 \lambda$ individuals. 

The remaining term $m + \frac 1 \lambda \sum_{j=1}^{m-1}\frac{1}{z_j}$ accounts for the time needed to generate the first individuals on a higher level. Given that the population is well-established on some level~$j$, by \Geins the probability that a new individual is on level $j+1$ or higher is at least~$z_j$. Since we generate $\lambda$ individuals in each step, the time to find an individual on a higher level (tacitly assuming that we stay well-established on level $j$, which is ensured by \Gdrei via Martingale concentration arguments) is at most $\lceil X/\lambda\rceil \le 1 + X/\lambda$, where $X$ is geometrically distributed with success probability $z_j$ and thus expectation $\frac 1 {z_j}$. 

This explanation of the definition of $t_0$ motivates that we can extend the result of Theorem~\ref{thm:level} to statements on how long it takes to have a certain level well-established (or even filled with at least $\gamma_0 \lambda$ individuals). This is what we do now. We omit the formal proof, but invite the reader to consult the proof in~\cite{DoerrK21}, which immediately yields our claim.

\begin{corollary}[Level-based theorem for filling sub-optimal levels]\label{cor:level}
Let a population process be given as described above.

Let $(A_1,\ldots,A_m)$ be a partition of $\X$. Let $A_{\ge j} := \bigcup_{i=j}^m A_i$ for all $j \in [1..m]$. Let $z_1,\ldots,z_{m-1},\delta \in (0,1]$, and let $\gamma_0 \in (0,\frac{1}{1+\delta}]$  with $\gamma_0 \lambda \in \Z$. Let $D_0 = \min\{\lceil 100/\delta \rceil,\gamma_0 \lambda\}$ and $c_1 = 56\,000$. 

Let $\ell \in [1..m-1]$ and 
$$
t_0(\ell) = \frac{7000}{\delta} \left(m + \frac{1}{1-\gamma_0} \sum_{j=1}^{\ell-1} \log^0_2\left(\frac{2\gamma_0\lambda}{1+\frac{z_j \lambda}{D_0}}\right) + \frac{1}{\lambda} \sum_{j=1}^{\ell-1}\frac{1}{z_j} \right),
$$
where $\log^0_2(x) := \max\{0,\log_2(x)\}$ for all $x \in \R$. Assume that for any population $P \in \X^\lambda$ the following three conditions are satisfied.
\begin{description}
	\item[(G1)] For each level $j \in [1..\ell-1]$, if $|P \cap A_{\geq j}| \geq \gamma_0 \lambda /4$, then
	$$
	\Pr_{y \sim D(P)} [y \in A_{\geq j+1}] \geq z_j.
	$$
	\item[(G2)] For each level $j \in [1..\ell-2]$ and all $\gamma \in (0,\gamma_0]$, if $|P \cap A_{\geq j}| \geq \gamma_0 \lambda {/4}$ and $|P \cap A_{\geq j+1}| \geq \gamma \lambda$, then
	$$
	\Pr_{y \sim D(P)} [y \in A_{\geq j+1}] \geq (1+\delta)\gamma.
	$$ 
	\item[(G3)] The population size $\lambda$ satisfies
	$$
	\lambda \geq {\frac{256}{\gamma_0 \delta} \ln \left(8 t_0(\ell) \right)}.
	$$
\end{description}

Then $T := \min\{\lambda t \mid |P_t \cap A_{\ge \ell}| \ge \gamma_0 \lambda\}$ satisfies
\begin{align*}
E[T] 
 & \leq 8\lambda t_0(\ell) = c_1 \frac{\lambda}{\delta} \left(m + \frac{1}{1-\gamma_0} \sum_{j=1}^{\ell-1} \log^0_2\left(\frac{2\gamma_0\lambda}{1+\frac{z_j \lambda}{D_0}}\right) + \frac 1 \lambda \sum_{j=1}^{\ell-1}\frac{1}{z_j} \right).
\end{align*}
\end{corollary}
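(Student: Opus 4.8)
The plan is not to produce a genuinely new argument but to re-read the proof of Theorem~\ref{thm:level} as it is carried out in~\cite{DoerrK19arxiv}, stopping the level-climbing analysis at level $\ell$ and reading off the time needed to \emph{fill} the relevant level rather than merely to place a first individual on the top level. Recall the structure of that proof. One tracks the ``current level'' $j$ of the population, where $j$ is informally the largest index with $|P \cap A_{\ge j}| \ge \gamma_0\lambda/4$ (the population is then said to be \emph{well established} on level $j$). The run is decomposed into phases: in the phase that starts when the population is well established on level $j$, condition \Geins guarantees that each generated offspring lies in $A_{\ge j+1}$ with probability at least $z_j$, so, using the concentration supplied by \Gdrei to rule out dropping below the well-established threshold during this time, after an expected $O(1 + \tfrac{1}{z_j\lambda})$ generations the population contains a first --- and in fact, typically $\Theta(z_j\lambda)$ --- individuals on $A_{\ge j+1}$; condition \Gzwei then forces $|P\cap A_{\ge j+1}|$ to grow by a factor $1+\delta$ per generation until it reaches $\gamma_0\lambda$, which costs $O\!\big(\tfrac1\delta \log^0_2(\tfrac{2\gamma_0\lambda}{1+z_j\lambda/D_0})\big)$ further generations, at which point the population is (more than) well established on level $j+1$ and the next phase begins.

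First I would sum these per-phase bounds for $j = 1, \dots, \ell-1$ only. This telescoping is exactly the computation done in~\cite{DoerrK19arxiv}, truncated at level $\ell$, and it yields the quantity $t_0(\ell)$ (the ``$+m$'' summand is a crude global overhead already present in~\cite{DoerrK19arxiv} and is simply kept unchanged). The stopping condition of the last counted phase is precisely ``at least $\gamma_0\lambda$ individuals on the target level'' (denoted $A_m$ in the statement), so this gives $E[\text{number of generations}] \le 8 t_0(\ell)$; multiplying by $\lambda$ converts generations into fitness evaluations via the accounting used throughout (cf.~\eqref{eq:runtime}), giving $E[T] \le 8\lambda t_0(\ell)$. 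Hypotheses \Geins, \Gzwei, \Gdrei are invoked exactly as in~\cite{DoerrK19arxiv}; in particular only the instances of \Geins and \Gzwei attached to the levels strictly below the target are used, which is why $z_\ell,\dots,z_{m-1}$ do not enter $t_0(\ell)$.

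The step I expect to be the main obstacle --- and the reason this is packaged as a corollary rather than an easy exercise --- is the concentration bookkeeping behind \Gdrei: one must show that over the entire (random) duration of all $\ell-1$ phases the population never falls back below the threshold $\gamma_0\lambda/4$ on the level it has already secured, despite the process being non-elitist. In~\cite{DoerrK19arxiv} this is handled by a supermartingale/negative-drift estimate whose failure probability is controlled by the $\ln(8t_0)$ factor in \Gdrei together with a union bound over the at most $t_0$ relevant generations. Cutting the analysis off at level $\ell$ only shortens the time horizon and reduces the number of levels that must be kept established, so the same estimate goes through verbatim with $t_0$ replaced by $t_0(\ell)$. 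Checking that no line of that estimate secretly appeals to a level above $\ell$ --- so that the weaker hypotheses listed in the corollary genuinely suffice --- is the routine-but-not-entirely-trivial verification that the phrase ``immediately yields our claim'' in the paragraph preceding the corollary is pointing at.
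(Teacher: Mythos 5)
Your proposal follows exactly the route the paper takes: the paper gives no formal proof of this corollary but instead explains the phase structure of the argument in \cite{DoerrK19arxiv} (first-individual waiting times giving the $\frac1\lambda\sum 1/z_j$ term, the $(1+\delta)$-growth giving the logarithmic term, and the martingale concentration behind \Gdrei keeping levels established) and then asserts that truncating that analysis at level $\ell$ immediately yields the claim. Your account of what must be checked in the truncation is accurate and, if anything, slightly more explicit than the paper's own justification, so there is nothing to correct.
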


\subsection{Proof of the Upper Bound}

We are now ready to prove our upper bound result. We start by giving a brief outline of the main arguments. We use our variant of the level-based theorem to argue that from any possible state of the algorithm, it takes an expected number of $O(n)$ iterations to reach a parent population that consists only of individuals in the local optimum (or the global optimum, but since we are done then, we can ignore this case). We call this an \emph{almost perfect} population. From this point on, we cannot use the level-based method anymore, since the small probability for going from the local to the global optimum would require a large value of $\lambda$, a requirement we try to avoid. This requirement is necessary in the level-based method because there one tries to ensure that once a decent number of individuals are on at least a certain level, this state is never lost. When $\lambda$ is only logarithmic in $n$, there is an inverse-polynomial probability to completely lose a level. Since for, say, $k = \Theta(n)$, we expect a runtime of roughly $n^k / \lambda$, in this time it will regularly happen that we lose a level, including the cases that we lose a level in each of several iterations or that we lose several levels at once.

We overcome this difficulty with a restart argument. Since the probability for such an undesirable event is only inverse-polynomial in $n$, we see that we keep an almost perfect population for at least $n^2$ iterations (with high probability). Since it took us only $O(n)$ iterations to reach (or regain) an almost perfect population, we obtain that in all but a lower order fraction of the iterations we have an almost perfect parent population. Hence apart from this lower order performance loss, we can assume that we are always in an almost perfect population. From such a state, we reach the optimum in one iteration with probability $1 - (1 - p_k)^\lambda$, which quickly leads to the claimed result.

We now state the formal proof, which makes this proof sketch precise and adds a few arguments not discussed so far.

\begin{proof}[Proof of Theorem~\ref{thm:upper}]
Since $n^n$ is a trivial upper bound for the expected runtime of any evolutionary algorithm creating all individuals as random search points or via standard bit mutation with mutation rate $\frac 1n$, simply because each of these search points with probability at least $n^{-n}$ is the optimum\footnote{This argument, ignoring however the initial search points, was made already in~\cite{DrosteJW02} to show this runtime bound for the \oea.}, we can assume that $k < n$.

Let $m = n+1$ and let $A_1, \dots, A_m$ be the partition of $\{0,1\}^n$ into the fitness levels of $\jump_{nk}$, that is, for all $i \in [1..m-1]$ we have $A_i = \{x \in \{0,1\}^n \mid f(x) = i\}$ and for $i = m$ we have $A_i = \{(1, \dots, 1)\}$. In particular, for all $i \in [1..m-1]$ and all $x \in A_i$, $y \in A_{i+1}$ we have $f(x) < f(y)$. Also, $A_m$ consists of the unique optimum and $A_{m-1}$ consists of all local optima.
  
  Consider a run of the \mclea on $\jump_{nk}$. As in Algorithm~\ref{alg:algo}, we denote by $P_t$ the population (of size $\mu$) selected in iteration~$t$, which serves as parent population in generation $t+1$. Let $P_0$ denote the initial population. We denote by $Q_t$ the offspring population (of size $\lambda$) generated in iteration $t$. Hence $P_t$ consists of $\mu$ best individuals chosen from $Q_t$. For the sake of a smooth presentation, let $Q_0$ be a population obtained from $P_0$ by adding $\lambda - \mu$ random search points of minimal fitness. Note that we can again assume that $P_0$ is obtained from $Q_0$ by selecting $\mu$ best individuals. 
  
  We say that a parent population $P_t$ is \emph{almost perfect} if $P_t \subseteq A_{\ge m-1}$. Note that this is equivalent to having $|Q_t \cap A_{\ge m-1}| \ge  \mu$.
  
  \textbf{Step 1:} We first argue that for any time $s \ge 0$ and regardless of what is~$Q_s$, the first time $S \ge s$ such that $P_{S}$ is almost perfect satisfies $E[S-s] \le 8 t_0$, where
  \[
t_0 = \frac{10^4}{\delta} \left(m + \frac{1}{1-\gamma_0} \sum_{j=1}^{m-2} \log^0_2\left(\frac{2\gamma_0\lambda}{1+\frac{z_j \lambda}{D_0}}\right) + \frac{1}{\lambda} \sum_{j=1}^{m-2}\frac{1}{z_j} \right).
\]

To ease the notation, we assume that $s=0$. To estimate $S$, we apply our variant of the level-based theorem (Corollary~\ref{cor:level}) to the process $(Q_t)_{t \ge 0}$. Since optimizing jump functions up to the local optimum is very similar to optimizing the \onemax function, this analysis is similar to an analogous analysis for \onemax (where we note that the work~\cite{CorusDEL18} proving the previous-best result for \onemax for most details of the proof refers to the not very detailed conference paper~\cite{Lehre11}).
  
  We choose suitable parameters to use the level-theorem. For ${j \in [1..k-1]}$, this corresponds to the fitness levels lying in the gap region of $\jump_{nk}$, let $z_j = \frac{1}{4} \frac {n-j}{en}$. For $j \in [k..m-2]$, here $A_j$ consists of the search points $x$ with $\OM(x) = j-k$, we let $z_j = \frac{1}{4} \frac{n - (j-k)}{en}$. Note that for $j \in [1..k-1]$, we have $z_j \ge \frac{1}{4} \frac {k+1-j}{en}$, and hence 
  \begin{equation}
  \sum_{j=1}^{m-2} \frac 1 {z_j} \le 4en \sum_{i=2}^{n} \frac 1 i \le 4en \ln n,\label{eq:harmonic}
  \end{equation}
  recalling that the harmonic number $H_n = \sum_{i=1}^n \frac 1i$ satisfies $H_n \le \ln(n)+ 1$, see, e.g., \cite[(1.4.12)]{Doerr20bookchapter}. Note also, for later, that for any $j$ we have $z_j \ge \frac 14 \frac{n-j}{en}$.
  
  Let 
  $\gamma_0$ be such that $\gamma_0 \lambda = \lfloor \frac{\lambda}{(1+\delta)e} \rfloor$. Note that by our assumption that $\lambda$ is large, $\gamma_0 \lambda$ is an integer greater than one as required in Corollary~\ref{cor:level}. Also, $\gamma_0 \le \frac 1 {1+\delta}$ as required. By our assumption $\lambda \ge (1+\delta) e \mu$, we have $\gamma_0 \lambda \ge \mu$. Trivially, $\gamma_0 \le \frac{1}{(1+\delta)e} \le \frac 1e$. Let $D_0 = \min\{\lceil 100 / \delta \rceil, \gamma_0 \lambda\}$. 
  
  We check that the conditions \textbf{(G1)} to \textbf{(G3)} of Corollary~\ref{cor:level} are satisfied for $\ell = m-1$. To show \Geins and \Gzwei,  let $t \ge 0$ be any iteration. 
  
  \textbf{(G1):} Let $j \in [1..m-2]$ such that $|Q_t \cap A_{\ge j}| \ge \gamma_0 \lambda / 4$. We need to show that an offspring $y$ generated in iteration $t+1$ is in $A_{\ge j+1}$ with probability at least $z_j$. Let first $j \ge k$, that is, $A_j$ is not a level in the gap. Let $y$ be an offspring generated in iteration $t+1$ and let $x \in P_t$ be its random parent, which we can assume to be not the optimum as otherwise we would be done already. Since $\gamma_0 \lambda / 4 \ge \mu/4$, there are at least $\mu/4$ individuals in $P_t \cap A_{\ge j}$. Hence with probability at least $1/4$, we have $x \in A_{\ge j}$. In this case, we have 
  \begin{align*}
  \Pr[y \in A_{j+1}] 
  &\ge \min\left\{\frac 1e, \left(1-\frac 1n\right)^{n-1} \frac{n - (j-k)}{n}\right\}\\ 
  &\ge \frac{n - (j-k)}{en},
  \end{align*} 
  where the first case refers to $x$ already being in $A_{\ge j+1}$, that is, $1  \le j+1-k \le \OM(x) \le n-1$, and uses Lemma~\ref{lem:gleich}, and where the second case refers to $x \in A_j$, that is, $\OM(x) = j-k$. In total, we have $\Pr[y \in A_{\ge j+1}] \ge \frac{1}{4} \frac{n - (j-k)}{en} = z_j$. If $j < k$, we proceed analogously with the only exception that, since in the first case we could have $\OM(x) = 0$, we now estimate $\Pr[\OM(y) = \OM(x)] \ge (1-\frac 1n)^n$. We thus obtain $\Pr[y \in A_{j+1} \mid x \in A_{\ge j}] \ge \min\{(1-\frac 1n)^n, (1-\frac 1n)^{n-1} \frac{n-j}{n}\} \ge (1-\frac 1n)^{n-1} \frac{n-j}{n} \ge \frac {n-j}{en}$. Consequently, now $\Pr[y \in A_{\ge j+1}] \ge \frac{1}{4} \frac{n-j}{en} = z_j$. 
  
  \textbf{(G2):} Let $j \in [1..m-2]$ such that $|Q_t \cap A_{\ge j}| \ge \gamma_0 \lambda / 4$. Let $\gamma \in (0,\gamma_0]$ such that $|Q_t \cap A_{\ge j+1}| \ge \gamma \lambda$. We need to show that an offspring $y$ generated in iteration $t+1$ is in $A_{\ge j+1}$ with probability at least $(1+\delta) \gamma$. Let $x$ be a parent selected uniformly at random from $P_t$, where again we assume that $P_t$ contains no optimal solution. There are at least $\min\{\gamma \lambda, \mu\} \ge \min\{\gamma (1+\delta) e \mu, \mu\} = \gamma (1+\delta) e \mu$ individuals in $P_t \cap A_{\ge j+1}$. Hence with probability at least $\gamma (1+\delta) e $, we have $x \in A_{\ge j+1}$. In this case, $\Pr[y \in A_{\ge j+1}] \ge \Pr[\OM(y) = \OM(x)] \ge \frac 1e$ by Lemma~\ref{lem:gleich} when $\OM(x) \neq 0$. When $\OM(x) = 0$, then $\Pr[y \in A_{\ge j+1}] \ge \Pr[\OM(y) \in \{0,1\}] \ge \Pr[\forall i \in [2..n] : x_i = y_i] = (1-\frac 1n)^{n-1} \ge \frac 1e$, where the first estimate uses our assumption $k<n$. Hence without conditioning on $x \in A_{\ge j+1}$, we have $\Pr[y \in A_{\ge j+1}] \ge \gamma (1+\delta)e \cdot \frac 1e \ge (1+\delta) \gamma$. 
  
  \textbf{(G3):} We first estimate $t_0$. We recall that $\gamma_0 \le \frac 1e$ and $z_j \ge \frac 14 \frac{n-j}{en}$ for all $j$. Thus, for all $j \in [1..m-2]$, we have 
  \[\log^0_2\left(\frac{2\gamma_0\lambda}{1+\frac{z_j \lambda}{D_0}}\right) \le \log^0_2\left(\frac{(2/e) D_0}{z_j}\right) \le \log_2\left(\frac{8 D_0 n}{n-j}\right). 
  \]
  Consequently, 
  \begin{align*}
  \frac{1}{1-\gamma_0} & \sum_{j=1}^{m-2} \log^0_2\left(\frac{2\gamma_0\lambda}{1+\frac{z_j \lambda}{D_0}}\right) 
   \le \frac{e}{e-1} \sum_{j=1}^{m-2} \log_2\left(\frac{8 D_0 n}{n-j}\right) \\
  & = \frac{e}{e-1} \log_2 \left(\prod_{j=1}^{m-2} \frac{8 D_0 n}{n-j} \right) \le \frac{e}{e-1} \log_2 \left(\frac{(8 D_0n)^n}{n!}\right) \\
  &\le \frac{e}{e-1} \log_2 \left(\frac{(8 D_0n)^n}{(n/e)^n}\right) = \frac{e}{e-1} n \log_2 (8eD_0),
  \end{align*} 
  where we used the well-known estimate $n! \ge (n/e)^n$, see, e.g.,~\cite[(1.4.13)]{Doerr20bookchapter}.
  
  From this and~\eqref{eq:harmonic}, we obtain 
  \begin{align}
  t_0 &= \frac{10^4}{\delta} \left(m + \frac{1}{1-\gamma_0} \sum_{j=1}^{m-2} \log^0_2\left(\frac{2\gamma_0\lambda}{1+\frac{z_j \lambda}{D_0}}\right) + \frac{1}{\lambda} \sum_{j=1}^{m-2}\frac{1}{z_j} \right)\nonumber\\
  & \le \frac{10^4}{\delta} \left( m + \frac{e}{e-1} n \log_2 (8eD_0) + \frac 1\lambda 4en \ln n \right) = O\left(n\right),\label{eq:tzero}
  \end{align}
  where the asymptotic estimate uses the fact that $\lambda = \Omega(\log n)$. This shows~\Gdrei. 
  
  From \Geins to \Gdrei, Corollary~\ref{cor:level} shows that after an expected number of $8t_0$ iterations, we have reached an offspring population $Q_t$ with $|Q_t \cap A_{\ge m-1}| \ge \gamma_0 \lambda = \mu$ and thus an almost perfect population $P_t$.

\textbf{Step 2:}
We now show that when $P_t$ contains only local optima, then with probability at least $1 - n^{-2}$, the same is true for $P_{t+1}$ or the global optimum has been found. Indeed, by our initial assumption $k < n$, the search points on the local optimum have a \onemax-value between $1$ and $n-1$. Hence Lemma~\ref{lem:gleich} implies that $X := |Q_{t+1} \cap A_{\ge m-1}|$ follows a binomial law with parameters $\lambda$ and success probability at least $\frac 1e$. By the additive Chernoff bound (Theorem~\ref{tprobchernoffadditive01}), we have
\begin{align*}
\Pr\left[X \le \frac{\lambda}{e (1+\delta)}\right] &= \Pr\left[X \le E[X] - \frac{\delta}{1+\delta} E[X]\right]\\
& \le \exp\left( - \frac{2(\frac{\delta}{1+\delta} E[X])^2}{\lambda}\right) \\
& = \exp\left(-\frac 2 {e^2} \left(\frac{\delta}{1+\delta}\right)^2 \lambda\right) \le n^{-2}
\end{align*}
by our assumption that $\lambda \ge K \ln(n)$ with a constant $K$ sufficiently large. Since $\mu \le \frac{\lambda}{e (1+\delta)}$, we have $|P_{t+1} \cap A_{\ge m-1}| < \mu$ only if $X < \mu \le \frac{\lambda}{e (1+\delta)}$. As just computed, this happens with probability at most $n^{-2}$.

\textbf{Step 3:}
Recall that $p_k = (1-\frac 1n)^{n-k} n^{-k}$ is the probability to generate the optimum from a parent on the local optimum. Let $T_0 = \min\left\{\left\lceil \sqrt{t_0 / \lambda p_k} \, \right\rceil, \left\lfloor n^{3/2} \right\rfloor \right\}$. We call a \emph{phase} of a run of the algorithm an interval of (i)~first all iterations until we have an almost perfect parent population $P_t$, and then (ii)~another exactly $T_0$ iterations. We assume here for simplicity that we continue to run the algorithm even when it found the optimum; in such a case, we replace such an optimum immediately with a random search point on the local optimum. Since we are interested in the first time an optimum is found, this modification does not change our results. By definition and step~1 above, the expected length of a phase is at most $8 t_0 + T_0$ regardless of how this phase starts.

We call a phase \emph{regular} if after reaching an almost perfect parent population $P_t$ we never (in the following exactly $T_0$ iterations) encounter a parent population $P_t$ that is not almost perfect. By a simple union bound and step~2 above, each phase is regular with probability at least $1 - n^{-2} T_0 \ge 1 - n^{-1/2}$, regardless how the phase started and how it reached an almost perfect parent population. 

A regular phase is \emph{successful} if it finds the optimum at least once. Since in a regular phase at least $\lambda T_0$ times an offspring is generated from a parent on the local optimum (which results in the global optimum with probability $p_k$), and since these offspring are generated independently, the probability for a regular phase to be not successful is at most $(1 - p_k)^{\lambda T_0}$, which is at most $\frac{1}{1 + p_k \lambda T_0}$ by an elementary estimate stated as Lemma~8 in~\cite{RoweS14}. Since thus a regular phase is successful with probability at least $1 - \frac{1}{1 + p_k \lambda T_0} = \frac{p_k \lambda T_0}{1 + p_k \lambda T_0}$, it takes an expected number of $\frac{1 + p_k \lambda T_0}{p_k \lambda T_0} = 1 + \frac{1}{p_k \lambda T_0}$ regular phases to find the optimum. Since phases are regular with probability at least $1- n^{-1/2}$, it takes an expected number of at most $\frac{1}{1- n^{-1/2}} \cdot (1 + \frac{1}{p_k \lambda T_0})$ phases to find the optimum. By Wald's equation, these take an expected number of at most $\frac{1}{1- n^{-1/2}} \cdot (1 + \frac{1}{p_k \lambda T_0}) \cdot (8 t_0 + T_0)$ iterations. We estimate
\begin{align*}
\left(1 + \frac{1}{p_k \lambda T_0}\right) &\cdot (8 t_0 + T_0)  = 8t_0 + T_0 + \frac{8t_0}{p_k \lambda T_0} + \frac{1}{p_k \lambda}\\
& \le 8 t_0 + \sqrt{\frac{t_0}{p_k \lambda}} + 1 + 8 \sqrt{\frac{t_0}{p_k \lambda}} + \frac{8t_0}{p_k \lambda \lfloor n^{3/2} \rfloor} + \frac{1}{p_k \lambda}.
\end{align*}
Recalling that $t_0 = O(n)$, we note that this expression is $O(n)$ when $p_k \lambda = \Omega(1/n)$ and $(1+o(1)) \frac{1}{p_k \lambda}$ when $p_k \lambda = o(1/n)$. Recalling further that each iteration contains $\lambda$ fitness evaluations, see also~\eqref{eq:runtime}, the claim follows. 
\end{proof}

\section{Conclusion}

In this work, we observed that for all reasonable parameter values, the \mclea cannot optimize jump functions faster than the \mplea. The \mclea thus fails to profit from its ability to leave local optima to inferior solutions. While we prove this absence of advantage formally only for the basic \mclea and jump functions (which constitute, however, a standard algorithm and a classic benchmark), we feel that our proofs do not suggest that this result is caused by very special characteristics of the \mclea or the jump functions, but that it rather follows from the fact that leaving a local optimum having moderate radius of attraction via comma selection is generally difficult because, relatively independent of the population sizes, there is a strong drift towards the local optimum. We do not show such a strong drift when $\lambda < 2 \mu$, but in this case the selection pressure is known to be so low that no efficient optimization is possible.

Overall, this work suggests that the role of comma selection in evolutionary computation deserves some clarification. Interesting directions for future research could be to try to find convincing examples where comma selection is helpful or a general result going beyond particular examples that shows in which situations comma selection cannot speed up the optimization of multimodal objective functions. From a broader perspective, any result giving a mildly general advice which of the existing approaches to cope with local optima are preferable in which situations, would be highly desirable. The new analysis methods developed in this work, which can yield precise runtime bounds for non-elitist population processes and negative drift situations, could be helpful as they now allow to prove or disprove constant-factor advantages.

\subsection*{Acknowledgment}

This work was supported by a public grant as part of the
Investissements d'avenir project, reference ANR-11-LABX-0056-LMH,
LabEx LMH.

}

\newcommand{\etalchar}[1]{$^{#1}$}


\end{document}